\newcommand{\Dir}{\mathrm{Dir}}
\newcommand{\Cat}{\mathrm{Cat}}
\newtheorem{theorem}{Theorem}
\newtheorem{lemma}{Lemma}
\newtheorem{assumption}{Assumption}
\newtheorem{corollary}{Corollary}
\newtheorem{definition}{Definition}
\newtheorem{proposition}{Proposition}
\newcommand{\ie}{\textit{i}.\textit{e}.,\ }
\newcommand{\rparagraph}[1]{\vspace{1.2mm}\noindent\textbf{#1.}}
\title{Demystifying Multi-Agent Debate: The Role of Confidence and Diversity}
\newcommand{\camid}{{\includegraphics[scale=0.018]{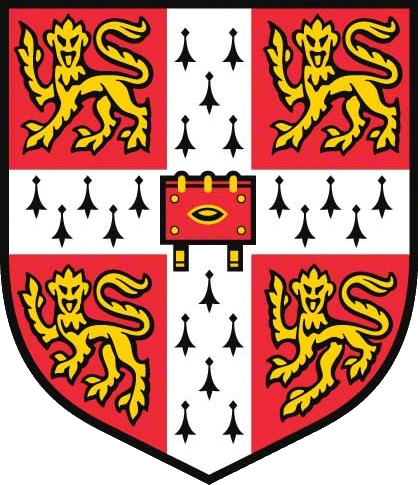}}}
\newcommand{\sheid}{{\includegraphics[scale=0.039]{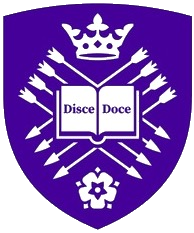}}}
\newcommand{\camemailadress}[1]{\href{mailto:#1@cam.ac.uk}{#1}}
\newcommand{\sheemailadress}[1]{\href{mailto:#1@sheffield.ac.uk}{#1}}
\author{
Xiaochen Zhu\thanks{Equal contribution. Our code is available at: \url{https://github.com/SpaceHunterInf/DMAD}}\textsuperscript{\camid} 
Caiqi Zhang\footnotemark[1]\textsuperscript{\camid}
Yizhou Chi \textsuperscript{\camid} 
Tom Stafford\textsuperscript{\sheid} 
Nigel Collier\textsuperscript{\camid} 
Andreas Vlachos\textsuperscript{\camid} \\
  $^\camid$University of Cambridge~\,~ \textsuperscript{\sheid}University of Sheffield \\
  \{\camemailadress{xz479}, \camemailadress{cz391}, \camemailadress{yc697}, \camemailadress{nhc30},
  \camemailadress{av308}\}@cam.ac.uk \\
  \sheemailadress{t.stafford}@sheffield.ac.uk
}
\begin{document}
\maketitle
\begin{abstract}
Multi-agent debate (MAD) is widely used to improve large language model (LLM) performance through test-time scaling, yet recent work shows that vanilla MAD often underperforms simple majority vote despite higher computational cost. Studies show that, under homogeneous agents and uniform belief updates, debate preserves expected correctness and therefore cannot reliably improve outcomes. Drawing on findings from human deliberation and collective decision-making, we identify two key mechanisms missing from vanilla MAD: (i) diversity of initial viewpoints and (ii) explicit, calibrated confidence communication. We propose two lightweight interventions. First, a diversity-aware initialisation that selects a more diverse pool of candidate answers, increasing the likelihood that a correct hypothesis is present at the start of debate. Second, a confidence-modulated debate protocol in which agents express calibrated confidence and condition their updates on others’ confidence. We show theoretically that diversity-aware initialisation improves the prior probability of MAD success without changing the underlying update dynamics, while confidence-modulated updates enable debate to systematically drift to the correct hypothesis. Empirically, across six reasoning-oriented QA benchmarks, our methods consistently outperform vanilla MAD and majority vote. Our results connect human deliberation with LLM-based debate and demonstrate that simple, principled modifications can substantially enhance debate effectiveness.
\end{abstract}

\section{Introduction}

Multi-agent debate (MAD) has rapidly become a popular technique for improving large language models' (LLMs) reasoning and test-time scaling (TTS) \cite{liang2024encouraging,du2023improving,yang2025revisiting,guo2024large}. 
Inspired by studies of group deliberation, MAD operationalises the idea that independent agents exchanging reasons should, in principle, outperform individual sampling or simple voting \cite{mercier2011humans,moshman1998collaborative}.

\begin{figure}[t]
    \centering
    \includegraphics[width=0.95\columnwidth]{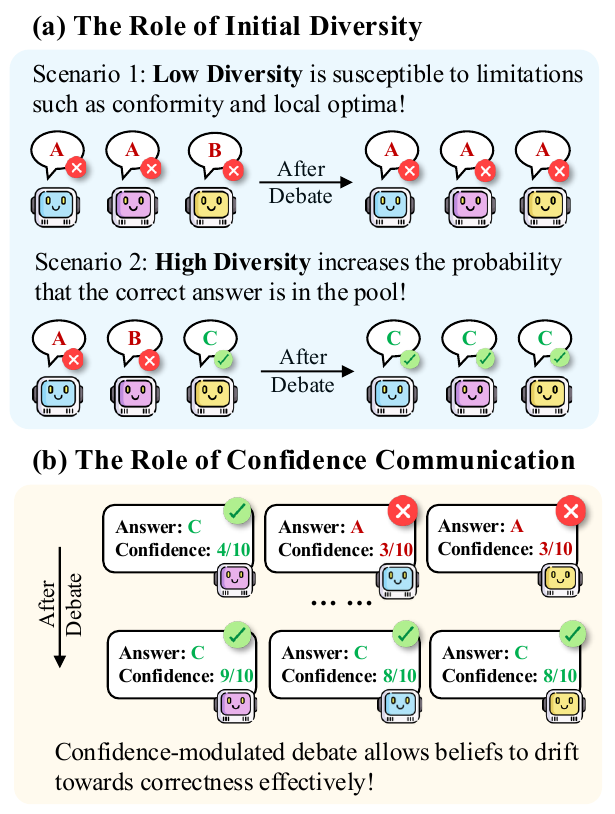}
    \caption{ Illustration of two human-inspired ingredients for effective MAD: (a) diverse initial answers increase the chance the correct hypothesis is present; (b) explicit confidence sharing enables confidence-weighted updates that steer beliefs toward the correct answer. }
    \label{fig:main}

\vspace{-3mm}

\end{figure}

Despite growing enthusiasm and substantial token budgets devoted to multi-agent debate (MAD), its benefits are far from guaranteed \cite{choi2025debate,wu2025can}. In practice, vanilla MAD often underperforms a simple majority vote despite incurring substantially higher computational cost. \citet{choi2025debate} formally characterise this limitation by showing that, under homogeneous agents and unweighted belief updates, debate preserves expected correctness over time.  Mathematically, such behaviour is described as a martingale over belief trajectories. In other words, while agents may exchange arguments, the probability of converging to the correct answer does not systematically increase.

Previous work in cognitive science has identified several factors that affect the success of human deliberation:

\begin{enumerate}
    \item \textbf{Diversity of initial viewpoints.} Human group deliberation benefits from heterogeneous priors and perspectives \cite{moshman1998collaborative}. Exposure to differing ideas 
    increases the search space over possible solutions and thus benefits the deliberation \cite{smaldino2024maintaining}. Empirical work shows that the diversity of positions raised during discussion is positively correlated with performance gains \cite{karadzhov2024effect}.
    \item \textbf{Confidence communication and alignment.} In human discussion, participants do not only share answers; they also communicate uncertainty, negotiate and align their confidence expressions, and use these signals to weight others' contributions when forming collective decisions \cite{bahrami2010optimally,dezecache2022democratic,mercier2017enigma}.
\end{enumerate}

These two mechanisms are largely absent in standard MAD implementations. Firstly, agents are sampled from the same or similar models without regard for diversity, a problem exacerbated by post-training and alignment that can induce diversity collapse \cite{padmakumar2023does,kirk2023understanding}. Secondly, unlike humans who convey confidence through tone, prosody, and facial expression \cite{guyer2021paralinguistic}, LLMs can only communicate confidence in an explicitly verbalised manner. Such signals are rarely requested or used in current MAD protocols, and LLMs are known to exhibit over-confident expressions even when they are wrong \cite{sun2025large}. 

Recent work has started to analyse these failure modes and the role of diversity in MAD. \citet{estornell2024multi} show that when agents lack diversity, debate dynamics can quickly become static, causing the procedure to collapse back to the initial majority. \citet{wynn2025talk} and \citet{wu2025can} empirically validate that, in such settings, MAD often provides little or no gain over majority vote. However, these works mostly diagnose the problem; they do not offer 
mechanisms that can reliably increase diversity and thereby improve debate outcomes. In parallel, several studies attempt to incorporate numerical confidence expressions into debates \cite{yoffe2024debunc,lin2025enhancing} and report promising improvements. Yet their implementations typically rely on post-hoc calibration and focus only on having agents state their confidence, while overlooking how agents interpret and use others' confidence, which is a central aspect of human group decision-making \cite{zarnoth1997social, fusaroli2012coming}.

In this paper, we examine MAD through the lens of these two missing ingredients: \textbf{diversity} and \textbf{confidence}. By analysing debate dialogues, we show that initial answer diversity correlates with performance gains with statistical significance, and that MAD is more effective on harder datasets where such diversity arises naturally. Following this, we introduce two simple interventions. First, a \textbf{diversity-aware initialization} that selects a more diverse subset of candidate answers as the initial debate pool, improving performance without additional training. Second, a \textbf{confidence-modulated debate protocol} in which agents express calibrated confidence and condition their updates on others’ confidence. Crucially, we view them as complementary mechanisms acting on different stages of deliberation: diversity initialisation affects the initial belief distribution (\ie what hypotheses are present), while confidence affects the aggregation dynamics (\ie how each contribution influences others). We provide theoretical guarantees showing that diversity increases the prior probability of the debate success, while confidence-weighted updates break the martingale limitation and allow beliefs to drift toward correctness. Empirically, our methods consistently outperform vanilla MAD and majority vote across six reasoning-oriented question-answering (QA) benchmarks. Conceptually, this work integrates human deliberation theory with LLM-based MAD, offering a principled explanation and practical guidance for designing more effective multi-agent systems.

\section{Related Work}

\rparagraph{Multi-Agent Debate}
Human group deliberation often improves collective performance because discussion allows individuals to pool information, detect mistakes of each other, and benefit from diverse cognitive perspectives \cite{kerr2004group}. MAD aims to replicate these advantages with LLMs, revising low-quality responses, reducing hallucinations, and eliciting reasoning that may be overlooked by a single model \cite{chan2023chateval,liang2024encouraging,estornell2024multi}. 
The standard MAD protocol asks each agent to answer the same question and then revise their response given previous agents' arguments \cite{yang2025revisiting}. While MAD sometimes yields modest gains over single-model sampling, a growing body of work shows that it often fails to inherit the benefits of human debate. Several studies report that MAD can perform worse than simple majority vote, especially when debates extend over multiple rounds \cite{wynn2025talk,wu2025can}. \citet{wu2025can} argue that, unlike human groups, LLM agents struggle to recognise and disregard low-quality contributions, causing noise to propagate rather than dissipate. Most notably, \citet{choi2025debate} show that homogeneous LLM agents produce debate dynamics that behave empirically like a martingale over agents' expressed beliefs; they further prove that MAD cannot surpass the accuracy of majority vote in expectation. These findings reveal a fundamental discrepancy between human deliberation and LLM debate and motivate reconsideration of the assumptions and protocols underlying MAD.

\rparagraph{Diversity in Collective Problem Solving}
Diversity is a central factor in the success of human deliberation. Psychological work shows that heterogeneous viewpoints help mitigate conformity pressure and ``groupthink,'' where groups converge to majority opinions regardless of correctness \cite{janis1972victims,asch1955opinions}. Maintaining a diversity of candidate solutions and exploring disagreement improves group reasoning outcomes \cite{karadzhov2024effect,smaldino2024maintaining}. 
MAD exhibits similar vulnerabilities. Without sufficient diversity, debates collapse to the initial majority and fail to recover from incorrect early answers \cite{zhu2025conformity,estornell2024multi}. The problem is exacerbated by properties of contemporary LLMs: post-training alignment reduces sampling diversity \cite{kirk2023understanding,padmakumar2023does, hu2025navigating}, and models often exhibit sycophantic tendencies toward user prompts \cite{sharma2023towards}. Recent analyses of MAD suggest that greater diversity in initial reasoning is associated with improved performance \cite{wynn2025talk,liang2024encouraging,estornell2024multi}. However, this line of work primarily attributes diversity to heterogeneity across model families, parameterisations, or prompting styles. As a result, it remains unclear what fundamentally governs answer diversity or how it can be reliably increased in a simple, training-free manner. For example, \citet{liang2024encouraging} and \citet{choi2025debate} encourage diversity by prompting agents with different personas, while \citet{estornell2024multi} propose a KL-divergence–based pruning strategy to remove redundant answers during debate. These approaches alter prompting or debate structure but do not directly increase the diversity of the initial answer pool, nor do they explicitly target the prior probability of including a correct hypothesis.

\rparagraph{Confidence, Calibration, and Uncertainty Communication}
In human deliberation, confidence sharing is crucial for effective collaboration: individuals communicate uncertainty, align and adjust confidence expressions through interaction, and use others' confidence cues to weight arguments when forming joint decisions \cite{bahrami2010optimally,fusaroli2012coming,mercier2020not}. Such communication is facilitated by paralinguistic signals, tone, prosody, facial expression \cite{guyer2021paralinguistic}, which are unavailable in text-only LLM settings.  Although recent work has sought to teach LLMs to proactively express uncertainty during generation \citep{zhang2025reinforcement, yang-etal-2025-uncle, yang-etal-2025-logu}, these methods have not yet been systematically studied in debate or multi-turn settings \citep{zhang2026multiturn}.
Several recent studies instead introduce numerical confidence into MAD.  \citet{lin2025enhancing} and \citet{yoffe2024debunc} use post-hoc calibration methods to assign confidence scores to agents' answers and use these scores as additional signals during debate. While these approaches yield limited improvements, they diverge from psychological theories that emphasise aligned confidence communication and shared metacognition. Moreover, they overlook the gap between confidence expression and perception in LLMs: models are known to be systematically overconfident in their verbalised confidence judgments \cite{sun2025large}, and there is no guarantee that LLM agents interpret or appropriately weight others' confidence. As a result, current confidence-based approaches do not fully capture the mechanisms that make human deliberation successful.

\section{Formulation}

Following prior theoretical analyses of multi-agent debate (MAD), we adopt the framework of \citet{choi2025debate} and consider a controlled setting with homogeneous LLM agents performing question answering (QA). This abstraction isolates the structural properties of debate dynamics without conflating them with model heterogeneity or explicit role specialisation. 

Let $\{a_1, \dots, a_N\}$ denote $N$ agents. For an input question $x \in \mathcal{X}$, each agent samples an answer
\begin{equation}
    y_i \sim a_i(x), \qquad a_i(x) := f(x;\theta_i),
\end{equation}
where $f$ is the conditional distribution defined by an LLM with parameters $\theta_i$.  
In the homogeneous setting, all agents share identical parameters ($\theta_i = \theta$), and stochasticity arises solely from sampling.

\rparagraph{Majority Vote}
Given an initial pool of sampled responses $\{y_i\}_{i=1}^N$, the majority vote baseline selects
\begin{equation}
        V(\{y_i\}_{i=1}^N)
        = \arg\max_{y} \, \mathrm{freq}(y).
\end{equation}
This corresponds to test-time ensembling without interaction and provides an important empirical baseline for vanilla MAD.

\rparagraph{Multi-Agent Debate}
A debate consists of $T$ rounds of iterative revision. At round $t=1$, each agent independently samples an initial answer $y_{i,1} \sim a_i(x)$. For each subsequent round $t>1$, every agent observes all the answers produced by other agents and itself in the previous round:
\begin{equation}
    \mathcal{R}_t = \{\, y_{j,t-1} \mid j = 1,\dots,N \,\}.
\end{equation}
Each agent then revises its answer by applying an answer-update operator
\begin{equation}
    y_{i,t} = \mathcal{D}(x, \mathcal{R}_t),
\end{equation}
where $\mathcal{D}$ is instantiated as a prompting-based revision operator executed by the LLM.

After $T$ rounds, the ensemble-level MAD output is obtained by majority vote over the terminal responses:
\begin{equation}
    y_{\mathrm{MAD}} = V(\{y_{i,T}\}_{i=1}^N).
\end{equation}

\section{Methodology}

Incentivized by the role of diversity and confidence in human deliberation, we incorporate diversity-aware initialization and confidence modulated debate as additional features on top of vanilla MAD.

\subsection{Diversity-aware Initialization}

Studies of human group deliberation suggest that considering a more diverse set of candidate solutions is correlated with higher collective answer correctness \cite{karadzhov2024effect}. One proposed explanation is that broader exploration of the solution space increases the likelihood that at least one high-quality hypothesis is discovered, rather than converging prematurely to suboptimal consensus \cite{smaldino2024maintaining}. We adopt this intuition as a design principle for initializing multi-agent debate.

Formally, given a set of answers $S$, we define its diversity as the number of distinct answers:
\begin{equation}
    \mathrm{div}(S) = |\mathrm{unique}(S)|.
\end{equation}
Vanilla MAD initializes debate by independently sampling $N$ answers $\{y_{i,1}\}_{i=1}^N$. Instead, we first sample a larger pool of $N_{\text{cand}} \ge N$ candidate answers $ \{y^{(1)}, \dots, y^{(N_{\text{cand}})}\}$,
and select a subset of size $N$ to initialise the debate by solving
\begin{equation}
    S_{\mathrm{div}}
    = \underset{S \subseteq \{1,\dots,N_{\text{cand}}\},\, |S| = N}{\operatorname{argmax}}
    \mathrm{div}(S).
\end{equation}
We apply a simple, training-free greedy approximation that iteratively selects the candidate with the largest marginal contribution to diversity until $N$ answers are chosen. Compared to vanilla MAD, this procedure incurs additional test-time cost by sampling $N_{\text{cand}}$ rather than $N$ initial answers, but requires no model training or architectural modification. Unlike approaches that prune similar responses within a single debate round \cite{estornell2024multi}, our method preserves the standard MAD protocol and increases the probability that the initial debate state contains at least one correct hypothesis.

\subsection{Confidence-Modulated Debate}
\label{sec:confidence-modulated debate}
\citet{bahrami2010optimally} show that when debate participants can directly communicate their uncertainty, joint decisions improve. In human deliberation, such uncertainty cues are often conveyed through paralinguistic signals (e.g., facial expressions and tone) \cite{guyer2021paralinguistic}. However, these cues are unavailable in text-based LLM debates, where confidence must be expressed verbally. Moreover, estimating confidence for black-box models such as LLMs is itself non-trivial. While confidence can be inferred indirectly through sampling-based uncertainty estimates \citep{zhang-etal-2024-luq, zhang2025atomic} or linguistic markers \citep{yang-etal-2025-logu, yang-etal-2025-uncle}, we focus on verbalised numerical confidence, as it is the most direct signal agents can exchange during debate and can be explicitly calibrated \citep{zhang2025reinforcement}.

To incorporate confidence into debate, we extend each agent's output at round $t$ to
\begin{equation}
    y^{conf}_{i,t} = (y_{i,t}, \, w_{i,t}),
\end{equation}
where $w_{i,t} \in \{0, \dots, 10\}$ is a discrete confidence score indicating the agent's self-assessed certainty, where 0 indicates the agent is unsure about its answer and 10 indicates total confidence. The debate-update operator becomes
\begin{equation}
    y^{w}_{i,t}
        = \mathcal{D}^{w}(x, \mathcal{R}_{t}^w),
\end{equation}
where $\mathcal{R}_{t}^w$ contains each agent's answer and confidence.

We use the term \emph{modulated} to emphasise that confidence does not change the content of an agent’s answer, but instead scales its influence during aggregation, analogous to how modulation alters the strength of a carrier signal without changing the underlying message \cite{crecraft2002analog}. However, leveraging such confidence signals introduces two \textbf{key challenges}. First, verbally reported scores are often miscalibrated: LLMs may express high confidence even when incorrect \cite{sun2025large}. Second, even with calibrated scores, agents may fail to \emph{use} confidence effectively during interaction---\ie they may ignore confidence cues or update their beliefs inappropriately. We therefore (i) train agents to express calibrated numerical confidence, and (ii) teach agents to perceive and exploit confidence signals during debate via reinforcement learning.

\rparagraph{Calibrated Confidence Expression}
Rather than relying on post-hoc calibration \cite{lin2025enhancing,yoffe2024debunc}, we directly train the model to generate calibrated confidence scores during debate using reinforcement learning (RL).

Similar to \citet{zhang2025reinforcement} and \citet{stangel2025rewarding}, we use a binary cross-entropy style log-based reward encouraging alignment between correctness and expressed confidence. Let $z_{i,t} \in \{0,1\}$ denote the correctness indicator of $y_{i,t}$ and map $w_{i,t}$ to $(0,1)$, we adopt the following reward:
\begin{equation}
\begin{split} 
r^{\mathrm{conf}}(y_{i,t}, w_{i,t}) = \frac{\lambda}{R_{\max}}
      [z_{i,t}\, \log(w_{i,t}) \\
      \;+\; (1 - z_{i,t})\, \log(1 - w_{i,t}) ].
\end{split} 
\end{equation}
$\lambda$, $R_{max}$ are scaling coefficients. This encourages the model to assign higher confidence to correct predictions and lower confidence to incorrect ones.  

\rparagraph{Confidence Perception and Usage}
Expression alone does not guarantee that agents use confidence appropriately during debate. As emphasised in \citet{fusaroli2012coming}, effective deliberation requires confidence alignment, where individuals adjust their reasoning strategies based on others' confidence signals. To train LLM agents to perceive and use confidence, we augment the reward with correctness signal and apply RL to the debate-update operator:
\begin{equation}
\label{eq: perception loss}
    r^{\mathrm{total}}
        = \lambda_1 \, z_{i,t}
        \;+\; \lambda_2 \, r^{\mathrm{conf}}(y_{i,t}, w_{i,t}),
\end{equation}
where $\lambda_1, \lambda_2$ control the trade-off between accuracy and calibration. In this case, the learned policy is encouraged to explicitly condition on others' confidence. This captures both calibrated expression and active alignment and usage of confidence, two components necessary for mirroring human deliberation dynamics.

\section{Theoretical Analysis}
\label{sec:theory}

We now analyse how does our interventions improve over the vanilla MAD dynamics of \citet{choi2025debate}. Unlike prior work that studies diversity or confidence in isolation, we analyse them as complementary mechanisms acting on different stages of the same deliberative process. We work within their Dirichlet–categorical model (DCM) for MAD, and state only the ingredients needed for our results; a full description is given in the Appendix \ref{app:DCM}.

Let $A = \{1,\dots,K\}$ be the finite answer set and assume, without loss of generality, that option $1$ is the unique correct answer. At debate round $t$, agent $i$ is parameterised by a Dirichlet vector
\begin{equation}
  \boldsymbol{\alpha}_{i,t}
  =
  \bigl(\alpha^{(1)}_{i,t},\dots,\alpha^{(K)}_{i,t}\bigr)
  \in \mathbb{R}^K_{>0},
\end{equation}
and generates its answer $y_{i,t} \in A$ by
\[
  \boldsymbol{\theta}_{i,t} \sim \Dir(\boldsymbol{\alpha}_{i,t}),
  \qquad
  y_{i,t} \mid \boldsymbol{\theta}_{i,t} \sim \Cat(\boldsymbol{\theta}_{i,t}).
\]
We write
\begin{equation}
  p_{i,t}
  :=
  \Pr(y_{i,t} = 1 \mid \boldsymbol{\alpha}_{i,t})
  =
  \frac{\alpha^{(1)}_{i,t}}{\sum_{k=1}^K \alpha^{(k)}_{i,t}}
\end{equation}
for agent $i$'s belief on the correct option at round $t$.

In the given DCM model, agents update their Dirichlet parameters by adding one count for each agent’s answer in the previous round. Let $\mathcal{F}_t$ denote the filtration generated by all belief parameters up to round $t$. In the homogeneous, fully connected setting where all agents share the same prior and observe the same multiset of answers, \citet{choi2025debate} show that
\begin{equation}
  \mathbb{E}\!\left[p_{i,t} \mid \mathcal{F}_{t-1}\right]
  = p_{i,t-1},
\end{equation}
so $\{p_{i,t}\}_{t > 0}$ is a martingale. This means during the debate, the expected belief at the next round always equals to the current belief. In expectation, vanilla MAD neither helps nor hurts the performance. 

\subsection{Diversity Improves What Is Debated}
\label{sec:theory-diversity}

Diversity-aware initialisation affects the support of the debate by increasing the likelihood that the initial answer pool contains at least one correct 
hypothesis. Importantly, this intervention operates entirely at initialisation and does not modify the subsequent debate dynamics.

\begin{proposition}[Diversity-aware initialization improves prior success]
\label{prop:diversity-main}
Let $A_T$ be the event that debate outputs the correct answer at the final round $T$ under the unweighted DCM dynamics. Let $S$ denote the number of distinct informative hypotheses in the initial answer pool (e.g., the number of distinct options that are sampled at least once). Suppose that the conditional success probability $\mathbb{P}(A_T \mid
S=s)$ is nondecreasing in $s$, and that the distribution of $S$ under our diversity-aware initialiser first-order stochastically dominates that under random i.i.d.\ sampling. Then
\[
\mathbb{P}(A_T \mid \text{diverse init})
\;\ge\;
\mathbb{P}(A_T \mid \text{random init}).
\]
\end{proposition}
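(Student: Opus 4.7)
The plan is to reduce the claim to a textbook application of first-order stochastic dominance (FSD), once we have correctly conditioned on the diversity random variable $S$. Since $S$ captures the effect of the initialisation procedure on the debate's starting support, and the DCM update rule from round $2$ onward depends only on the multiset of answers produced in the previous round, we can factor the overall success probability through $S$ and then compare the two initialisation methods as distributions over $S$.

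First, I would apply the law of total probability conditional on $S$, writing, for each initialisation method $\mathrm{init} \in \{\text{diverse}, \text{random}\}$,
\begin{equation}
  \mathbb{P}(A_T \mid \mathrm{init})
  = \sum_{s=1}^{K}
     \mathbb{P}(A_T \mid S = s, \mathrm{init}) \,
     \mathbb{P}(S = s \mid \mathrm{init}).
\end{equation}
A short argument is needed here: in the DCM dynamics of \citet{choi2025debate}, each agent's posterior pseudo-count update depends only on the previous round's answer multiset, so conditioning on the initial multiset $\mathcal{R}_1$ renders the downstream debate trajectory independent of the sampling procedure that generated $\mathcal{R}_1$. Hence $\mathbb{P}(A_T \mid S = s, \mathrm{init})$ is the same under either initialiser, and we may write it as a single function $g(s) := \mathbb{P}(A_T \mid S = s)$, which is nondecreasing in $s$ by hypothesis.

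Second, I would invoke the standard characterisation of FSD: if the law of $S$ under the diversity-aware initialiser first-order stochastically dominates its law under random i.i.d.\ sampling, then for every nondecreasing bounded function $g$,
\begin{equation}
  \mathbb{E}_{\text{diverse}}[g(S)]
  \;\ge\;
  \mathbb{E}_{\text{random}}[g(S)].
\end{equation}
Combining this inequality with the decomposition above yields exactly $\mathbb{P}(A_T \mid \text{diverse init}) \ge \mathbb{P}(A_T \mid \text{random init})$, which is the claim.

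The only real obstacle I anticipate is the conditional-independence step used to eliminate $\mathrm{init}$ inside $\mathbb{P}(A_T \mid S = s, \mathrm{init})$: the statement tacitly assumes that, given $S$, the two initialisers induce exchangeable distributions over the initial Dirichlet pseudo-counts. If the greedy diversity-aware selector changes the multiplicities of repeated answers at fixed $S$ in a way that matters for the DCM martingale (for example, by systematically reducing the pseudo-count mass on an incorrect majority option), then this reduction is not exact and one should either (i) strengthen the monotonicity hypothesis to $\mathbb{P}(A_T \mid \mathcal{R}_1)$ being nondecreasing in an appropriate partial order on multisets refining $S$, or (ii) absorb the multiplicity effect into the statement of FSD on a richer summary than $S$. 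Under the counts-only abstraction used throughout Section~\ref{sec:theory}, however, the reduction is clean and the FSD step closes the proof in one line.
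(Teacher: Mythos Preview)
Your proposal is correct and takes essentially the same approach as the paper: decompose $\mathbb{P}(A_T\mid\mathrm{init})$ via the law of total probability on $S$, identify the conditional success probability with a nondecreasing function of $s$, and apply the standard FSD characterisation to compare expectations. Your additional discussion of whether $\mathbb{P}(A_T\mid S=s,\mathrm{init})$ is genuinely independent of the initialiser is more careful than the paper---which simply takes the well-definedness of $\mathbb{P}(A_T\mid S=s)$ as part of the hypothesis---but given how the proposition is stated, that step is already assumed and the proof closes exactly as you outline.
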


Proposition~\ref{prop:diversity-main} formalises that diversity-aware
initialization improves MAD by shifting the distribution of the initial debate state toward pools with broader hypothesis coverage. Conditional on this state, the unweighted debate dynamics remain a martingale, but the prior probability that debate begins with a useful hypothesis is strictly increased. A full proof is provided in Appendix~\ref{app:diversity-effect}.

\subsection{Confidence Improves How Debate Aggregates}

Confidence-modulated debate changes how information is aggregated. Answers that are both high-confidence and positively correlated with correctness receive more weight in the Dirichlet updates, causing the expected belief in the correct answer to drift upward over rounds instead of remaining flat as in the unweighted martingale case. Consider the DCM debate model described above and extend it so that, at each round, agents exchange answers together with scalar confidence scores and update their Dirichlet parameters using confidence-weighted counts.

\begin{theorem}[Confidence-weighted debate yields a submartingale]
\label{thm:confidence-main}

Assume that: (i) agents are homogeneous and fully connected; and (ii) confidence is positively correlated with correctness, in the sense that higher-confidence answers are, on average, more likely to be
correct than lower-confidence ones. Let $p_{i,t}$ denote agent $i$'s belief on the correct answer at round $t$, and let $\mathcal{F}_t$ be the filtration generated by the debate history. Then the belief process $\{p_{i,t}\}_{t\ge 0}$ becomes a strict submartingale:
\[
\mathbb{E}\!\left[p_{i,t} \mid \mathcal{F}_{t-1}\right]
\;\ge\;
p_{i,t-1},
\]
with strict inequality on a set of positive probability. 
\end{theorem}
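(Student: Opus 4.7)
The plan is to extend the Dirichlet--count update so that each agent's answer contributes a count equal to its verbalised confidence, and then show that the resulting belief process has a non-negative conditional drift. Concretely, the confidence-weighted update rule becomes
\[
    \alpha^{(k)}_{i,t}
    = \alpha^{(k)}_{i,t-1}
    + \sum_{j=1}^N w_{j,t-1}\,\mathbf{1}[y_{j,t-1}=k],
\]
so writing $S := \sum_k \alpha^{(k)}_{i,t-1}$, $W := \sum_j w_{j,t-1}$, $C := \sum_j w_{j,t-1}\,\mathbf{1}[y_{j,t-1}=1]$, and $p := p_{i,t-1} = \alpha^{(1)}_{i,t-1}/S$, a short manipulation using $\alpha^{(1)}_{i,t-1} = pS$ yields the key identity
\[
    p_{i,t} - p \;=\; \frac{C - pW}{S + W}.
\]
This reduces the submartingale claim to showing $\mathbb{E}[(C - pW)/(S+W) \mid \mathcal{F}_{t-1}] \geq 0$.

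Next I would expand $C - pW = \sum_j w_j(\mathbf{1}[y_j=1] - p)$ and invoke agent homogeneity (so that $(y_j,w_j)$ are i.i.d.\ across $j$) to write
\[
    \mathbb{E}\!\left[\frac{C - pW}{S+W}\,\Big|\,\mathcal{F}_{t-1}\right]
    = N\,\mathbb{E}\!\left[\frac{w_1(\mathbf{1}[y_1=1]-p)}{S+W}\,\Big|\,\mathcal{F}_{t-1}\right].
\]
Conditioning on $R_{-1} := S + \sum_{j>1} w_j$, so that $S+W = R_{-1} + w_1$ and $(y_1,w_1)$ is independent of $R_{-1}$, each per-agent term reduces to an expectation over $(y_1,w_1)$ alone, with the strictly increasing function $\phi(w) := w/(R_{-1}+w)$. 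A direct computation then gives
\[
    \mathbb{E}\!\left[\phi(w_1)(\mathbf{1}[y_1=1]-p)\,\big|\,R_{-1}\right]
    = p(1-p)\bigl(\mathbb{E}[\phi(w_1)\mid y_1=1] - \mathbb{E}[\phi(w_1)\mid y_1\neq 1]\bigr).
\]

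Finally, I would turn the calibration hypothesis---higher confidence being more predictive of correctness---into the statement that $\Pr(y_1=1\mid w_1)$ is non-decreasing in $w_1$, which is equivalent to a monotone likelihood ratio and hence to stochastic dominance of the law of $w_1\mid y_1=1$ over that of $w_1\mid y_1\neq 1$. Combined with the monotonicity of $\phi$, this makes the bracketed difference non-negative; integrating back over $R_{-1}$ and summing over $j$ delivers $\mathbb{E}[p_{i,t}\mid\mathcal{F}_{t-1}] \geq p_{i,t-1}$. For strictness, I would argue that whenever $0<p<1$, the calibration is genuinely informative on a set of positive probability, and at least one agent assigns positive weight, the dominance is strict and $\phi$ is strictly increasing, so the bracketed difference is strictly positive on that event.

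The main obstacle I expect is the final step: upgrading the informal ``positively correlated with correctness'' assumption in the theorem into a usable monotone likelihood ratio / stochastic dominance condition that survives the non-linearity of $1/(S+W)$. This is exactly where the martingale result of \citet{choi2025debate} breaks down in our favour, and it requires an FKG-style treatment because $C$ and $W$ appear jointly in the ratio $(C - pW)/(S+W)$, so neither naive linearity of expectation nor a Jensen-type bound on $\mathbb{E}[C/W]$ would suffice.
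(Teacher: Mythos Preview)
Your proposal is correct and reaches the same conclusion, but it diverges from the paper's argument precisely at the step you flag as the main obstacle. Both proofs start from the identical identity $p_{i,t}-p=(C-pW)/(S+W)$; the paper rewrites this as the convex combination $p_{i,t}=\bar\lambda\,p+(1-\bar\lambda)\,\hat q$ with $\bar\lambda=S/(S+W)$ and $\hat q=C/W$, then takes conditional expectations and asserts $\mathbb{E}[p_{i,t}\mid\mathcal{F}_{t-1}]=\lambda\,p+(1-\lambda)\,q$ where $\lambda=\mathbb{E}[\bar\lambda]$ and $q=\mathbb{E}[C]/\mathbb{E}[W]$, invoking unspecified ``Jensen-type arguments'' both to identify $\mathbb{E}[\hat q]$ with the ratio of expectations and to factor $\mathbb{E}[(1-\bar\lambda)\hat q]$. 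That is exactly the non-linearity you worry about, and the paper's treatment of it is heuristic. Your route---exchangeability to reduce to a single agent, conditioning on $R_{-1}$ to isolate $(y_1,w_1)$ from the denominator, and then stochastic dominance of $w_1\mid y_1{=}1$ over $w_1\mid y_1{\neq}1$ applied to the strictly increasing $\phi(w)=w/(R_{-1}+w)$---handles the random denominator rigorously. The cost is that you must read the informal hypothesis (ii) as a monotone-likelihood-ratio condition, whereas the paper's appendix formalises it only as the covariance-type inequality $\mathbb{E}[w\,\mathbf{1}\{y{=}1\}]/\mathbb{E}[w]\ge p$; your argument in fact explains why that weaker reading is not obviously sufficient once $S+W$ is random.
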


Conceptually speaking, confidence-modulated debate breaks the martingale symmetry of vanilla MAD and, in expectation, strictly improves correctness over rounds. Whereas a martingale preserves the expected belief in the correct answer, a submartingale has increasing expected belief, implying systematic progress toward the correct hypothesis. Appendix~\ref{app:confidence-weighted} defines the confidence-weighted Dirichlet-categorical update, and Appendix~\ref{app:submartingale} proves that this yields a (strict) submartingale.

\begin{table*}[t]
\small
\centering
\renewcommand{\arraystretch}{1.1}
\setlength{\tabcolsep}{5pt}

\scalebox{0.95}{
\begin{tabular}{lcccccccc}
\toprule
\textbf{Model Strategy} & \textbf{GSM8K} & \textbf{CSQA} & \textbf{HellaSwag} & \textbf{MMLU-FL} & \textbf{MMLU-PM} & \textbf{MMLU-All} & \textbf{GPQA} & \textbf{ARC-C} \\
\midrule
\multicolumn{9}{l}{\textit{Qwen-2.5-7B-Instruct}} \\
\midrule
Single Model
& \cellcolor[HTML]{FFFFFF}0.908 & \cellcolor[HTML]{FFFFFF}0.784 & \cellcolor[HTML]{FFFFFF}0.752 & \cellcolor[HTML]{FFFFFF}0.547
& \cellcolor[HTML]{FFFFFF}0.765 & \cellcolor[HTML]{FFFFFF}0.735 & \cellcolor[HTML]{FFFFFF}0.298 & \cellcolor[HTML]{FFFFFF}0.805 \\
Debate $5\times5$
& \cellcolor[HTML]{FFFFFF}0.847 & \cellcolor[HTML]{FFFFFF}0.727 & \cellcolor[HTML]{FFFFFF}0.698 & \cellcolor[HTML]{E5F0E7}0.571
& \cellcolor[HTML]{FFFFFF}0.734 & \cellcolor[HTML]{FFFFFF}0.727 & \cellcolor[HTML]{F3FAF4}0.339 & \cellcolor[HTML]{FBFEFC}0.882 \\
\rowcolor{black!7}
Majority Vote
& \cellcolor[HTML]{FFFFFF}0.908 & \cellcolor[HTML]{FFFFFF}0.808 & \cellcolor[HTML]{FFFFFF}0.762 & \cellcolor[HTML]{FFFFFF}0.548
& \cellcolor[HTML]{FFFFFF}0.806 & \cellcolor[HTML]{FFFFFF}0.764 & \cellcolor[HTML]{FFFFFF}0.335 & \cellcolor[HTML]{FFFFFF}0.880 \\
\midrule
High Diversity
& \cellcolor[HTML]{E2F6E6}0.920 & \cellcolor[HTML]{C6EFCE}\textbf{0.836} & \cellcolor[HTML]{C6EFCE}0.779 & \cellcolor[HTML]{C6EFCE}0.587
& \cellcolor[HTML]{C6EFCE}0.822 & \cellcolor[HTML]{FFFFFF}0.743 & \cellcolor[HTML]{FFFFFF}0.334 & \cellcolor[HTML]{F3FAF4}0.884 \\
HD + Learn2Agg
& \cellcolor[HTML]{FFFFFF}0.893 & \cellcolor[HTML]{FFFFFF}0.776 & \cellcolor[HTML]{D2ECD6}0.777 & \cellcolor[HTML]{C6EFCE}0.603
& \cellcolor[HTML]{FFFFFF}0.805 & \cellcolor[HTML]{FFFFFF}0.752 & \cellcolor[HTML]{CFEFD6}0.353 & \cellcolor[HTML]{E7F7EA}0.889 \\
Confidence
& \cellcolor[HTML]{FBFEFC}0.910 & \cellcolor[HTML]{FBFEFC}0.810 & \cellcolor[HTML]{C6EFCE}\textbf{0.782} & \cellcolor[HTML]{C6EFCE}0.587
& \cellcolor[HTML]{E2F6E6}0.818 & \cellcolor[HTML]{F3FAF4}\textbf{0.769} & \cellcolor[HTML]{D2ECD6}0.350 & \cellcolor[HTML]{C6EFCE}\textbf{0.911} \\
HD + Confidence
& \cellcolor[HTML]{C6EFCE}\textbf{0.932} & \cellcolor[HTML]{CDEFD4}0.828 & \cellcolor[HTML]{C9F0D0}0.781 & \cellcolor[HTML]{C6EFCE}\textbf{0.619}
& \cellcolor[HTML]{C9F0D0}\textbf{0.825} & \cellcolor[HTML]{F3FAF4}\textbf{0.769} & \cellcolor[HTML]{CDEFD4}\textbf{0.361} & \cellcolor[HTML]{E7F7EA}0.890 \\
\midrule
\multicolumn{9}{l}{\textit{Llama-3.1-8B-Instruct}} \\
\midrule
Single Model
& \cellcolor[HTML]{FFFFFF}0.797 & \cellcolor[HTML]{FFFFFF}0.703 & \cellcolor[HTML]{FFFFFF}0.676 & \cellcolor[HTML]{FFFFFF}0.503
& \cellcolor[HTML]{FFFFFF}0.798 & \cellcolor[HTML]{FFFFFF}0.701 & \cellcolor[HTML]{FFFFFF}0.266 & \cellcolor[HTML]{FFFFFF}0.780 \\
Debate $5\times5$
& \cellcolor[HTML]{FFFFFF}0.850 & \cellcolor[HTML]{FFFFFF}0.688 & \cellcolor[HTML]{FFFFFF}0.667 & \cellcolor[HTML]{E2F6E6}0.544
& \cellcolor[HTML]{FFFFFF}0.772 & \cellcolor[HTML]{FFFFFF}0.680 & \cellcolor[HTML]{CDEFD4}0.316 & \cellcolor[HTML]{FFFFFF}0.802 \\
\rowcolor{black!7}
Majority Vote
& \cellcolor[HTML]{FFFFFF}0.870 & \cellcolor[HTML]{FFFFFF}0.718 & \cellcolor[HTML]{FFFFFF}0.686 & \cellcolor[HTML]{FFFFFF}0.532
& \cellcolor[HTML]{FFFFFF}0.809 & \cellcolor[HTML]{FFFFFF}0.711 & \cellcolor[HTML]{FFFFFF}0.290 & \cellcolor[HTML]{FFFFFF}0.832 \\
\midrule
High Diversity
& \cellcolor[HTML]{FFFFFF}0.863 & \cellcolor[HTML]{FBFEFC}0.721 & \cellcolor[HTML]{D2ECD6}\textbf{0.701} & \cellcolor[HTML]{CDEFD4}0.548
& \cellcolor[HTML]{FFFFFF}0.805 & \cellcolor[HTML]{FFFFFF}0.687 & \cellcolor[HTML]{C6EFCE}0.321 & \cellcolor[HTML]{C6EFCE}0.877 \\
HD + Learn2Agg
& \cellcolor[HTML]{CDEFD4}\textbf{0.891} & \cellcolor[HTML]{C6EFCE}0.747 & \cellcolor[HTML]{FFFFFF}0.677 & \cellcolor[HTML]{C9F0D0}0.551
& \cellcolor[HTML]{FFFFFF}0.797 & \cellcolor[HTML]{FFFFFF}0.700 & \cellcolor[HTML]{C6EFCE}0.350 & \cellcolor[HTML]{FFFFFF}0.832 \\
Confidence
& \cellcolor[HTML]{E2F6E6}0.882 & \cellcolor[HTML]{CDEFD4}0.734 & \cellcolor[HTML]{FFFFFF}0.685 & \cellcolor[HTML]{CDEFD4}0.548
& \cellcolor[HTML]{E7F7EA}0.818 & \cellcolor[HTML]{E7F7EA}\textbf{0.720} & \cellcolor[HTML]{C6EFCE}\textbf{0.352} & \cellcolor[HTML]{C6EFCE}\textbf{0.882} \\
HD + Confidence
& \cellcolor[HTML]{CDEFD4}0.890 & \cellcolor[HTML]{C6EFCE}\textbf{0.752} & \cellcolor[HTML]{FBFEFC}0.688 & \cellcolor[HTML]{E5F0E7}\textbf{0.555}
& \cellcolor[HTML]{C6EFCE}\textbf{0.842} & \cellcolor[HTML]{FBFEFC}0.713 & \cellcolor[HTML]{C6EFCE}\textbf{0.352} & \cellcolor[HTML]{DCEEE0}0.854 \\
\bottomrule
\end{tabular}
}
\caption{Benchmark performance for Qwen-2.5-7B-Instruct and Llama-3.1-8B-Instruct across major datasets. Cell background indicates improvement (green) relative to the \colorbox{black!7}{Majority Vote} baseline within each model group; color intensity scales with the magnitude of the change. Best results per model group are \textbf{bolded}. We specifically look into MMLU Formal Logic (FL) and Professional Medicine (PM) to keep consistent with \citet{choi2025debate}.}
\label{tab:main}
\vspace{-3mm}

\end{table*}

\begin{table}[h]
\footnotesize
\centering
\begin{tabular}{lcc}
\midrule
Model & r & p \\
\midrule
Qwen-2.5-7B-Instruct  & 0.070 & <0.001 \\
Llama-3.1-8B-Instruct & 0.019 & 0.002  \\
\bottomrule
\end{tabular}
\caption{Pearson correlation (r) and significance levels (p) for initial answer diversity vs.\ debate performance gain (n=26{,}624).}
\label{tab: div vs. gain}
\vspace{-3mm}
\end{table}

\section{Experiment Setup}

We first conduct a series of experiments to study the dynamics of vanilla MAD and provide additional analysis under the setting of \citet{choi2025debate}. Specifically, we initialise 5 homogeneous agents using the same language model, and run debates for 5 turns. To obtain a comprehensive understanding, we evaluate MAD across a wide range of tasks using multiple language models.

\rparagraph{Datasets}
We cover a series of QA datasets with varying levels of difficulty. Our in-domain datasets include GSM8K \cite{cobbe2021gsm8k}, CommonsenseQA \cite{talmor-etal-2019-commonsenseqa}, HellaSwag \cite{zellers2019hellaswag}, and MMLU \cite{hendryckstest2021}, which contains 57 sub-tasks across diverse knowledge domains. We perform confidence-modulated debate training on these datasets. To assess the generalization capabilities of our methods, we additionally evaluate on GPQA-Main \cite{rein2024gpqa} and ARC-Challenge \cite{allenai:arc}, which we treat as out-of-domain (OOD) benchmarks.

\rparagraph{Language Models and Prompts}
We evaluate Llama-3.1-8B-Instruct \cite{dubey2024llama} and Qwen-2.5-7B-Instruct \cite{qwen2.5} for both vanilla MAD and our trained variants. In the MAD setting, we elicit chain-of-thought (CoT) reasoning \cite{wei2022chain} by prompting the model to output its reasoning steps followed by its final answer. To incorporate confidence, we additionally prompt the model to produce a numerical confidence value. Detailed prompts are provided in Appendix~\ref{app:prompts}.

\rparagraph{Sampling and Training Schema}
For agent initialisation and all inference steps, we apply temperature $1.0$ and nucleus sampling with $p=0.9$ \cite{holtzman2019curious}. For diversity injection, we set $N_{\text{cand}} = 10$. For confidence expression calibration and perception training, we use the GRPO algorithm combined with LoRA adaptation \cite{shao2024deepseekmath,hu2022lora}. Further details on the sampling strategy and training configuration are provided in the Appendix \ref{app: sampling and training}. We evaluate our methods against single model, majority vote, and majority vote after debate as baselines.

\section{Results \& Analysis}
\begin{figure}[t]
  \centering
  
  \begin{subfigure}[b]{0.24\textwidth}
    \centering
    \includegraphics[width=\linewidth, trim=15cm 10cm 15cm 55cm, clip]{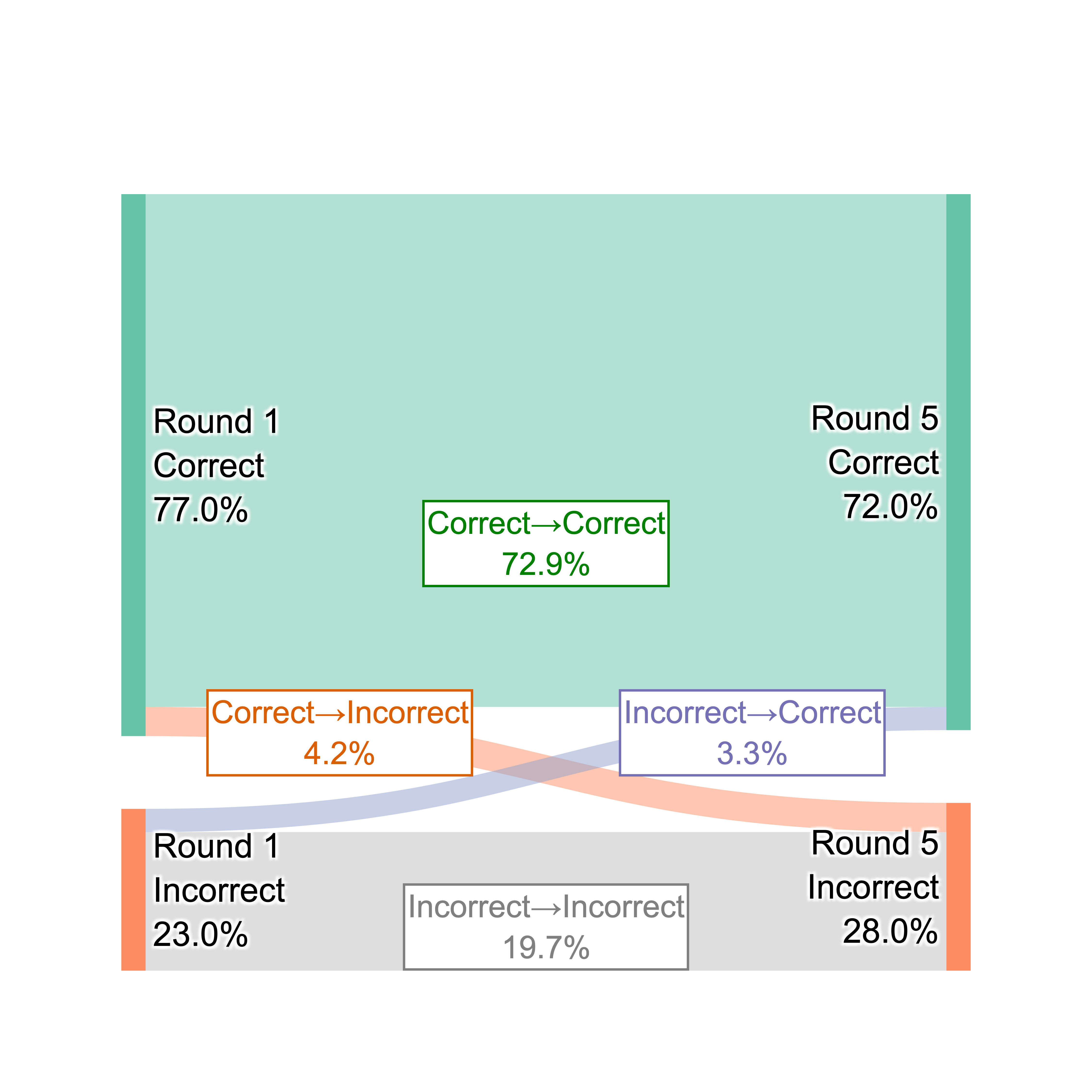}
    \caption{Qwen Vanilla}
    \label{fig:3}
  \end{subfigure}\hfill
  \begin{subfigure}[b]{0.24\textwidth}
    \centering
    \includegraphics[width=\linewidth, trim=15cm 10cm 15cm 55cm, clip]{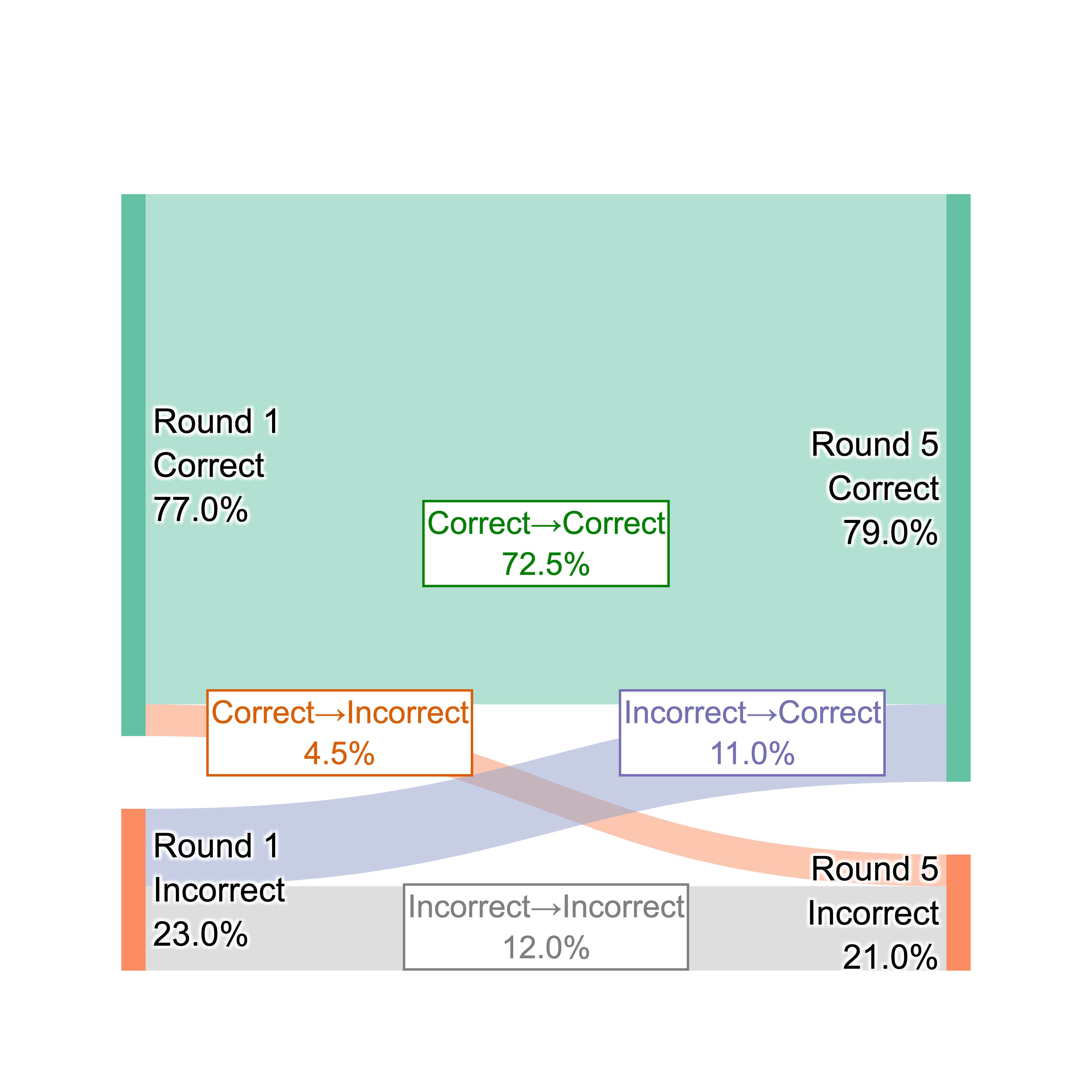}
    \caption{Qwen Confidence}
    \label{fig:4}
  \end{subfigure}
  \caption{Sankey plots of correctness transitions from round 1 to round 5 (Qwen): The confidence-modulated debate (b) shows improvement on answer revision (blue), indicating better belief drift to correctness. }
  \label{fig:flow_distribution}
\vspace{-3mm}

\end{figure}
\rparagraph{Diversity alone improves MAD performance} We observe that vanilla MAD performs similarly or worse than simple majority vote, echoing findings reported in prior work \cite{choi2025debate}. However, further analysis reveals several meaningful patterns. As shown in Table~\ref{tab: div vs. gain}, even without any intervention, initial answer diversity in vanilla MAD is significantly correlated with final answer accuracy. While the correlation is weak, considering its consistency and direction in addition to the diversity collapse of the instruct model, they together suggest that diversity exerts a systematic effect in vanilla MAD, motivating mechanisms that explicitly leverage this signal. This also mirrors findings in human group deliberation, where diversity promotes collective accuracy \cite{karadzhov2024effect}.

We additionally study how diversity interacts with task difficulty. We define the difficulty as a model’s accuracy on a given dataset. Figure~\ref{fig:diversity_corr} shows that MAD naturally produces a more diverse initial answer pool on harder datasets (\ie low accuracy). Consequently, as illustrated in Figure~\ref{fig:perf_gain}, MAD tends to be more beneficial on more challenging tasks. These results provide an empirical explanation for when MAD helps and when it fails.

\begin{figure}[t]
  \centering
  \begin{subfigure}[t]{0.49\columnwidth}
    \centering
    \includegraphics[width=\linewidth]{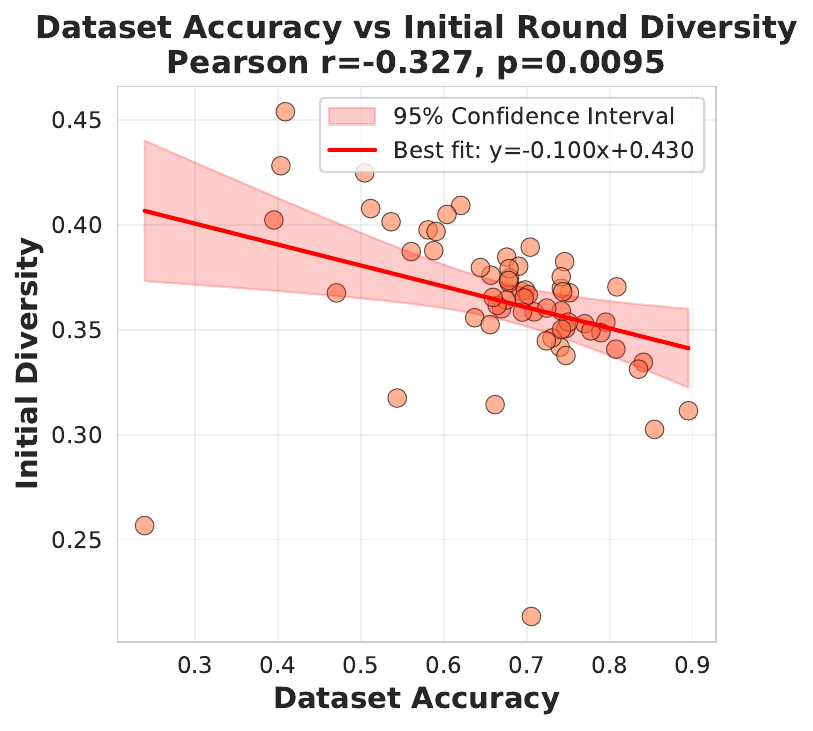}
    \caption{Diversity vs. Difficulty}
    \label{fig:diversity_corr} 
  \end{subfigure}\hfill
  \begin{subfigure}[t]{0.49\columnwidth}
    \centering
    \includegraphics[width=\linewidth]{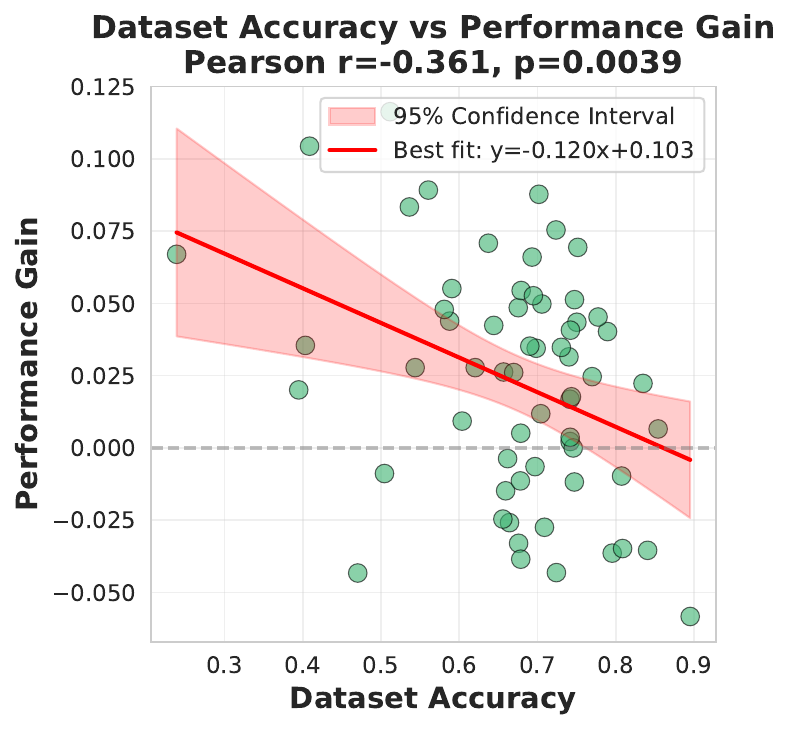}
    \caption{Diversity vs. Performance Gain}
    \label{fig:perf_gain} 
  \end{subfigure}

  \caption{\textbf{(a)} Correlation between initial answer diversity and dataset difficulty (measured by model accuracy) over 62 datasets (including 57 MMLU sub-tasks). \textbf{(b)} Relationship between dataset difficulty and debate performance gain. MAD tends to yield larger gains on more difficult datasets.}
  \label{fig:main_analysis}
\vspace{-3mm}

\end{figure}

As shown in Table~\ref{tab:main}, across both models, our diversity-aware initialisation consistently improves accuracy on nearly all datasets and surpasses simple majority vote. To quantify this effect, we directly measure (i) the number of unique answers and (ii) Pass@5 in the initial round (i.e., whether any agent initially produces a correct answer). As shown in Table~\ref{tab: div prior}, with diversity-aware initialisation, both metrics increase across the board, demonstrating that our intervention raises the prior probability of MAD success by ensuring more debates begin with at least one correct hypothesis. This directly supports the theoretical conclusion of Proposition~\ref{prop:diversity-main}.

\begin{table}[h]
\footnotesize
\centering
\begin{tabular}{lcc}
\toprule
\multicolumn{3}{c}{\textbf{(a) Overall Pass@5}} \\
\midrule
Model & Random & Diverse \\
\midrule
Qwen-2.5-7B-Instruct  & 0.7921 & 0.9097 \\
Llama-3.1-8B-Instruct & 0.7423 & 0.9026 \\
\midrule
\multicolumn{3}{c}{\textbf{(b) Number of Unique Answers@5}} \\
\midrule
Model & Random & Diverse \\
\midrule
Qwen-2.5-7B-Instruct  & 1.45 & 1.61 \\
Llama-3.1-8B-Instruct & 1.92 & 2.32 \\
\bottomrule
\end{tabular}
\caption{Comparison of Pass@5 and answer diversity across models and initialisation regimes.}
\label{tab: div prior}
\vspace{-3mm}
\end{table}

\rparagraph{Confidence-modulated debate improves aggregation} Our confidence-modulated debate includes an additional training phase in which agents learn to express calibrated confidence and to use others’ confidence when revising their answers. To isolate the effect of confidence, we compare against an ablation that removes the confidence reward and simply trains a model to select the best answer via aggregation (Learn2Agg), similar to \citet{zhao2025majority}. Even without diversity-aware initialisation, the confidence-modulated debate consistently outperforms simple aggregation across all evaluated datasets, including the two OOD benchmarks. This confirms that incorporating confidence signals provides a reliable mechanism for improving debate quality, consistent with the theoretical prediction of Theorem~\ref{thm:confidence-main}. We further analyse how the dynamics change with in the debate. We conducted a controlled experiment using the initial turn output from vanilla MAD to initialise the confidence modulated debate. As shown in Figure \ref{fig:flow_distribution}, confidence modulation increases correction of initially incorrect answers and stabilises correct ones. This indicates that confidence-modulated debate improved belief propagation with better drift to correctness, aligning with our theoretical analysis of Theorem~\ref{thm:confidence-main}.

\section{Conclusion}

We revisit multi-agent debate (MAD) through the lens of human group deliberation and recent theory showing that vanilla MAD induces a martingale over agents’ beliefs. Drawing on psycholinguistic and socio-cognitive insights, we identify two missing ingredients in standard LLM debates: diversity of initial viewpoints and explicit, calibrated confidence communication. We introduce a diversity-aware initialization that increases the probability that debate starts with a correct hypothesis, and a confidence-modulated debate protocol in which agents express and use calibrated confidence. Theoretically, we show that diversity improves prior probability of MAD success without affecting martingale dynamics, while confidence-weighted updates break the martingale symmetry and yield a submartingale over correctness. Empirically, our methods consistently outperform vanilla MAD and majority vote across six benchmarks. Overall, our results demonstrate that simple, principled changes inspired by human deliberation can substantially improve LLM multi-agent debate.

\section*{Limitations}

Our work has several limitations. First, our theoretical analysis builds on the Dirichlet--categorical debate model and homogeneity assumptions of \citet{choi2025debate}. While this abstraction enables clear insights into debate dynamics, it does not fully capture the complexity of real LLM behavior. In particular, the DCM model reduces belief updates to simple counting, ignoring that agents may receive different inputs or contextual positions, which can induce effects such as position bias \cite{wang2024large}. Moreover, the martingale analysis abstracts away settings in which LLMs may refine their own beliefs through self-reflection or internal reasoning \cite{huang2023large}.

Second, although our diversity-aware initialization is training-free and effective, it relies on a heuristic selection of diverse outputs from a fixed candidate pool. We do not claim this strategy to be optimal, and more principled or adaptive objectives for promoting diversity may further improve debate outcomes \cite{he2025rewarding,anschel2025group}.

Third, our confidence-modulated debate assumes that expressed confidence is positively correlated with correctness after calibration. While our training procedure improves this alignment empirically, confidence miscalibration remains a known challenge for LLMs, and failures in confidence estimation could reduce the effectiveness of confidence-weighted aggregation.

Fourth, our experiments focus on homogeneous agents and fully connected debate graphs. Extending the framework to heterogeneous models, asymmetric communication topologies, or larger agent populations may introduce additional dynamics not captured in this work.

Finally, we evaluate MAD primarily on English language question-answering benchmarks. While these tasks are standard for studying reasoning and debate, future work is needed to assess whether our findings generalize to more open-ended, interactive, or real-world multi-agent settings.

\section*{Ethics Statement}

Our research adheres to strict ethical guidelines. We verified the licenses of all the software and datasets used in this study to ensure full compliance with their terms. No privacy concerns have been identified. We have conducted a thorough assessment of the project and do not anticipate any further risks. We only used AI assistance for grammar checking when writing the paper.

\section*{Acknowledgement}
We thank Zhaobo Han, PhD candidate in pure mathematics at UCLA, for reviewing our proofs. Andreas Vlachos is supported by the DARPA program SciFy.

\bibliography{custom}

\appendix

\clearpage

\section{Prompts}
\label{app:prompts}

Here we provide the prompt we used for MAD. We use the prompt in Table \ref{apptab:initial prompt} to elicit the initial pool of answers for the debate. For the subsequent rounds, we use the prompt in Table \ref{apptab:debate prompt} to encourage debate. For a single agent, the other agents' responses in the previous round will be concatenated together and presented together with its own previous response. For settings not incorporating confidence, the confidence related descriptions are simply removed. All the interactions are done using chat template.

\begin{table}[h!]
\centering
\begin{tcolorbox}[colback=blue!5!white, colframe=blue!50!black, title = {Initial Round Prompt with Confidence}, fontupper=\footnotesize, fonttitle=\footnotesize]

Answer the question. Think step by step first, at the end of your reasoning, provide your final answer and confidence level, ranging from 0 to 10, where 0 means no confidence at all and 10 means complete confidence. \\
\\
Output format: \\
<reasoning> YOUR DETAILED REASONING HERE </reasoning> \\
\\
<answer> YOUR FINAL ANSWER </answer> \{extra\_info\} \\
\\
<confidence> INTEGER </confidence>
\\

\{question\}

\end{tcolorbox}
\caption{Initial Round Prompt with Confidence.}
\label{apptab:initial prompt}
\end{table}

\begin{table}[h!]
\centering
\begin{tcolorbox}[colback=blue!5!white, colframe=blue!50!black, title = {MAD Prompt with Confidence}, fontupper=\footnotesize, fonttitle=\footnotesize]

You are revising your answer after reviewing other agents’ reasoning.

Question: \{question\}

Other agents’ responses:
\{other\_agents\}

Your previous reasoning and answer:
\{reasoning\}

Instructions: \\
- Reflect on how others reasoned. \\
- You may revise your answer if someone’s reasoning provides stronger evidence. \\
- However, if you believe all of them missed something important, propose a better or alternative answer — clearly explain why. \\
- Be concise and clear. \\
- Update your confidence level to reflect how certain you are now. Confidence level is ranging from 0 to 10, where 0 means no confidence at all and 10 means complete confidence. \\

Think step by step first, at the end of your reasoning, provide your final answer and confidence level in the following format:

<reasoning> YOUR DETAILED REASONING HERE </reasoning> \\
<answer> YOUR FINAL ANSWER </answer> \{extra\_info\} \\
<confidence> INTEGER </confidence>

\end{tcolorbox}
\caption{MAD Prompt with Confidence.}
\label{apptab:debate prompt}
\end{table}

\section{Sampling and Training Schema}
\label{app: sampling and training}
All experiments are conducted on NVIDIA A100 GPUs (80GB), with a total consumption of approximately 1{,}500 GPU hours. We use \texttt{vLLM} (v0.11.0) for all inference. Both training and inference are performed in \texttt{float16} precision. For all fine-tuning stages, we employ LoRA with rank 64.

\subsection{Confidence Expression Training}

We first initialize the model with a small number of supervised fine-tuning (SFT) steps to improve format adherence for confidence expression. To construct gold labels, we subsample 5k data points from the combined training datasets and use self-consistency \cite{wang2022self} to derive numerical confidence targets. We then proceed with reinforcement learning (RL) training on a subsampled set of 10k difficult examples from the training data. Table \ref{apptab:calibration-result} demonstrate that trained models exhibits well-calibrated verbalised confidence. Model also maintains a similar accuracy demonstrating the training does not introduce new knowledge that causing data leakage.

\begin{table}[h]
\footnotesize
\centering
\begin{tabular}{lcccc}
\toprule
\multicolumn{5}{c}{\textbf{Llama-3.1-8B-Instruct}} \\
\midrule
Method & Brier & ECE & AUROC & Acc \\
\midrule
Vanilla & 0.217 & 0.257 & 0.644 & 0.491 \\
SC      & 0.051 & 0.157 & 0.852 & N/A   \\
VC      & 0.069 & 0.167 & 0.763 & 0.502 \\
\midrule
\multicolumn{5}{c}{\textbf{Qwen-2.5-7B-Instruct}} \\
\midrule
Method & Brier & ECE & AUROC & Acc \\
\midrule
Vanilla & 0.212 & 0.217 & 0.576 & 0.567 \\
SC      & 0.078 & 0.156 & 0.855 & N/A   \\
VC      & 0.046 & 0.153 & 0.754 & 0.578 \\
\bottomrule
\end{tabular}
\caption{Calibration and performance metrics for Llama-3.1-8B-Instruct and Qwen-2.5-7B-Instruct on hard sub-sampled datasets. Lower is better for Brier score and ECE; higher is better for AUROC and accuracy. All values rounded to three significant digits.}
\label{apptab:calibration-result}
\end{table}

\subsection{Confidence Perception and Usage Training}

Since the input–output format between debate rounds is identical, the debate process can be viewed as Markovian, with each turn conditioned only on agents’ responses from the previous round. We therefore apply GRPO to single-turn inputs and outputs using the combined correctness and confidence rewards defined in Equation~\eqref{eq: perception loss}.

During training, we observed reward hacking behavior, where agents stopped engaging with others’ reasoning or confidence signals and instead output only a final answer. To mitigate this, we introduce a simple engagement constraint that encourages constructive interaction by rewarding the presence of discourse cues (e.g., ``agent'', ``I agree'', ``but I think'', ``convinced'', ``skeptical''). This heuristic helps ensure that agents actively reference and respond to others during debate.

\begin{table}[t]
\centering
\footnotesize

\begin{tabular}{ll}
\toprule
\multicolumn{2}{c}{\textbf{GRPO Confidence Calibration Setup}} \\
\midrule
LoRA & ($r{=}64$, $\alpha{=}32$; q/k/v/o proj) \\
Seq. Length & 2048 \\
\midrule
Epochs & 1 \\
Batch Size & 4 ($\times$2 grad acc.) \\
Learning Rate & $5\times10^{-6}$ \\
\midrule
Generations / Prompt & 8 \\
Sampling & $T{=}1.0$, top-$p{=}0.9$ \\s
KL Coefficient & $\beta{=}0.01$ \\
\bottomrule
\end{tabular}
\caption{Training configuration for confidence calibration.}
\end{table}

\begin{table}[t]
\centering
\footnotesize
\begin{tabular}{ll}
\toprule
\multicolumn{2}{c}{\textbf{GRPO Multi-Agent Debate Setup}} \\
\midrule
LoRA & ($r{=}64$, $\alpha{=}32$; q/k/v/o proj) \\
\midrule
Epochs & 1 \\
Batch Size & 4 ($\times$4 grad acc.) \\
Learning Rate & $5\times10^{-5}$ \\
\midrule
Generations / Prompt & 8 \\
Sampling & $T{=}1.0$, top-$p{=}1.0$ \\
KL Coefficient & $\beta{=}0.01$ \\
Max Completion Length & 1024 \\
\midrule
Confidence Used & Yes \\
Reward Scales & Correct: 10,\; Conf: 3,\; \\ & Engage: 5 \\
Wrong-Format Penalty & $-30$ \\
\bottomrule
\end{tabular}
\caption{GRPO configuration for multi-agent debate training.}
\end{table}

\section{What We Tried but Did Not Work}

\rparagraph{Increasing temperature does not increase answer diversity}
In early experiments, we attempted to promote diversity by increasing the sampling temperature. However, as shown in Table \ref{apptab: temperature vs diversity}, raising the temperature from $1.0$ to $1.2$ did not yield more diverse answers or improved MAD performance. We attribute this to the fact that higher temperature primarily induces diversity in \emph{reasoning paths} rather than in final answers. For instance, Qwen often produced the same answer expressed through different languages (e.g., Chinese or Korean) or stylistic variations. A higher temperature also harms instruction following and lead to worse output quality. This observation suggests that naive temperature scaling is insufficient for inducing answer-level diversity and motivates the need for more targeted diversity-aware initialization strategies.

\begin{table}[h]
\centering
\footnotesize
\begin{tabular}{lcc}
\toprule
\multicolumn{3}{c}{\textbf{Qwen-2.5-7B-Instruct}} \\
\midrule
Temperature & Unique Answers@5 & Accuracy \\
\midrule
$t=0.8$ & 1.18 & 0.722 \\
$t=1.0$ & 1.45 & 0.753 \\
$t=1.2$ & 1.42 & 0.733 \\
\bottomrule
\end{tabular}
\caption{Effect of temperature on answer diversity and average accuracy for Qwen-2.5-7B-Instruct. Values rounded to three significant digits.}
\label{apptab: temperature vs diversity}
\end{table}

\rparagraph{Training on harder data improves confidence calibration}
For confidence expression training, we initially subsampled 12k data points uniformly from the training datasets. In practice, the model struggled to converge under this setting. We attribute this to the model’s already high accuracy on these datasets: during GRPO training, many trajectories resulted in correct answers with maximal confidence ($10$), leading to a collapse in reward diversity and poor sampling efficiency. To address this issue, we manually curated a harder subset of data on which model accuracy was closer to 50\%. Training on this subset substantially improved convergence and led to more stable confidence calibration.

\onecolumn
\section{Proofs}
\label{app:confidence-martingale}

In this appendix we show that both our interventions of diversity injection and confidence-modulated debate could benefit the result of the vanilla MAD which is essentially a martingale process as established in \cite{choi2025debate}. For diversity injection, we are not changing the dynamics of the martingale process, but increases the prior probability of success. For confidence-modulated debate, the incorporation of a confidence score that is positively correlated with correctness breaks martingale process into submartingale. 

\subsection{Baseline DCM Debate Model}
\label{app:DCM}

We briefly recall the Bayesian model of multi-agent debate used in
\citet{choi2025debate}. Let $A=\{1,\dots,K\}$ denote the finite answer
set and let answer $1$ be the unique correct option. There are $N$
agents indexed by $i\in\{1,\dots,N\}$. At debate round $t$, agent $i$ is
parameterized by a Dirichlet belief vector
\begin{equation}
  \boldsymbol{\alpha}_{i,t} =
  \bigl(\alpha^{(1)}_{i,t},\dots,\alpha^{(K)}_{i,t}\bigr)
  \in \mathbb{R}^K_{>0},
\end{equation}
and generates its answer $y_{i,t}\in A$ as
\begin{align*}
  \boldsymbol{\theta}_{i,t} &\sim \Dir(\boldsymbol{\alpha}_{i,t}), \\
  y_{i,t} \mid \boldsymbol{\theta}_{i,t}
    &\sim \Cat(\boldsymbol{\theta}_{i,t}).
\end{align*}
The induced marginal probability of choosing option $k$ is
\begin{equation}
  \Pr(y_{i,t}=k \mid \boldsymbol{\alpha}_{i,t})
  = \frac{\alpha^{(k)}_{i,t}}{\sum_{j=1}^K \alpha^{(j)}_{i,t}}.
\end{equation}

We write
\begin{equation}
  p_{i,t}
  := \Pr(y_{i,t}=1 \mid \boldsymbol{\alpha}_{i,t})
   = \frac{\alpha^{(1)}_{i,t}}{\sum_{j=1}^K \alpha^{(j)}_{i,t}}
\end{equation}
for agent $i$'s belief (probability) on the correct option at round $t$.

Let $G$ be the (undirected) debate graph and $\mathcal{N}(i)$ denote the
neighbors of $i$ (possibly including $i$ itself). In the
\emph{unweighted} model, agent $i$ aggregates its neighbors' answers
$\{y_{j,t-1} : j\in\mathcal{N}(i)\}$ into a count vector
$\mathbf{c}_{i,t} = (c^{(1)}_{i,t},\dots,c^{(K)}_{i,t})$, where
\begin{equation}
  c^{(k)}_{i,t} = \sum_{j\in\mathcal{N}(i)}
    \mathbf{1}\{y_{j,t-1}=k\}.
\end{equation}
The Dirichlet parameters are then updated by Bayesian conjugacy:
\begin{equation}
  \boldsymbol{\alpha}_{i,t}
  = \boldsymbol{\alpha}_{i,t-1} + \mathbf{c}_{i,t}.
\end{equation}

Let $\mathcal{F}_{t}$ denote the filtration generated by all belief
parameters up to round $t$,
$\mathcal{F}_{t} := \sigma\bigl(\{\boldsymbol{\alpha}_{i,s} : i\le N,
s\le t\}\bigr)$. In the homogeneous, fully-connected setting where all
agents share the same prior and observe the same multiset of answers,
\citet{choi2025debate} show that
\begin{equation}
  \mathbb{E}\!\left[p_{i,t} \mid \mathcal{F}_{t-1}\right]
  = p_{i,t-1},
\end{equation}
so $\{p_{i,t}\}_{t> 0}$ is a martingale. Intuitively, each neighbour
contributes one \emph{unweighted} ``ball'' to a Pólya-urn-like update,
and symmetry ensures that, in expectation, debate neither helps nor
hurts.

\subsection{Effect of Diversity Injection}
\label{app:diversity-effect}

Recall that the martingale result of \citet{choi2025debate} states that,
conditional on the initial debate state $\mathcal{F}_1$ (the first-round pool
and its arguments), the expected belief does not drift:
\begin{equation}
\mathbb{E}[p_t \mid \mathcal{F}_1] = p_1 \qquad \forall t,
\end{equation}
so the probability that debate outputs the correct answer is determined entirely
by the quality of $\mathcal{F}_1$.

We now formalize how diversity-aware initialization improves this prior success
probability without altering the martingale dynamics.

\begin{theorem}[Effect of diversity-aware initialization]
\label{thm:diversity}
Let $A_T$ be the event that debate outputs the correct answer at the final
round $T$, and let $S \in \mathbb{N}$ denote the number of distinct informative
hypotheses (e.g., distinct correct or partially-correct reasoning chains)
present in the initial pool $\mathcal{F}_1$. Assume:
\begin{enumerate}
  \item[(i)] (\emph{Monotonicity in coverage}) The success probability is
  weakly increasing in $S$, i.e.
  \begin{equation}
    \label{eq:monotone}
    \Pr(A_T \mid S = s)
    \quad \text{is nondecreasing in } s .
  \end{equation}
  \item[(ii)] (\emph{Diversity increases coverage}) Let
  $S_{\mathrm{rand}}$ and $S_{\mathrm{div}}$ denote the values of $S$
  under random sampling and diversity-aware initialization respectively.
  Then $S_{\mathrm{div}}$ first-order stochastically dominates
  $S_{\mathrm{rand}}$, i.e.
  \begin{equation}
    \label{eq:FOSD}
    \Pr(S_{\mathrm{div}} \ge s)
    \;\ge\;
    \Pr(S_{\mathrm{rand}} \ge s)
    \quad \forall s,
  \end{equation}
  with strict inequality for some $s$.
\end{enumerate}
Then the overall probability of eventual correctness is higher under
diversity-aware initialization:
\begin{equation}
\Pr(A_T \mid \text{diverse})
\;\ge\;
\Pr(A_T \mid \text{random}),
\end{equation}
with strict inequality whenever the dominance in~\eqref{eq:FOSD} is strict.
\end{theorem}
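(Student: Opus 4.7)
The plan is to reduce the statement to the classical fact that first-order stochastic dominance is preserved by expectation against nondecreasing functions, and then argue that both hypotheses (i) and (ii) are precisely what make this reduction work in the DCM setting. Conditioning on $S$ by the law of total probability,
\[
\Pr(A_T \mid \text{init}) \;=\; \mathbb{E}\!\left[\,g(S_{\text{init}})\,\right], \qquad g(s) \;:=\; \Pr(A_T \mid S = s),
\]
so hypothesis (i) tells us that $g$ is nondecreasing. Implicit in writing this is the claim that in the homogeneous and fully-connected DCM of Appendix~\ref{app:DCM}, the debate trajectory depends on $\mathcal{F}_1$ only through the multiset of initial answers, so $g$ is well-defined as a function of the sufficient statistic $S$ alone, irrespective of how the pool was assembled.

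The core inequality then follows from the Abel summation identity
\[
\Pr(A_T \mid \text{diverse}) - \Pr(A_T \mid \text{random})
\;=\; \sum_{s \ge 1} \bigl(g(s) - g(s-1)\bigr)\bigl[\Pr(S_{\mathrm{div}} \ge s) - \Pr(S_{\mathrm{rand}} \ge s)\bigr].
\]
Each factor $g(s) - g(s-1)$ is nonnegative by (i), and each tail-probability difference is nonnegative by hypothesis (ii), so the right-hand side is $\ge 0$, establishing the weak inequality. Since $S$ is bounded by $K$, the sum is finite and no integrability issues arise. For the strict version, whenever \eqref{eq:FOSD} is strict at some $s_0$ and $g$ is strictly increasing at $s_0$, the corresponding summand is strictly positive while no other summand is negative.

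The main obstacle is not the calculation itself but the interpretive step of justifying (i) and (ii) for the concrete diversity-aware procedure. For (i), I would appeal to exchangeability of the Pólya-urn-style Dirichlet update and argue by coupling: conditional on sharing the same sequence of distinct informative options, adding an extra distinct correct-leaning option to the pool can only increase the count mass that feeds the Dirichlet update on the correct coordinate, hence monotonically increases $\Pr(A_T \mid S)$. For (ii), I would construct a monotone coupling between the greedy unique-answer selector and i.i.d.\ resampling from the same candidate pool: by construction the greedy procedure never yields fewer distinct answers than the corresponding i.i.d.\ subsample, so $S_{\mathrm{div}}$ coordinatewise dominates $S_{\mathrm{rand}}$, which is equivalent to FOSD. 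Strictness in (ii) then arises whenever the i.i.d.\ subsample has a positive chance of containing duplicates that the greedy procedure avoids, which is generically the case unless the base sampler already places all mass on distinct outputs.
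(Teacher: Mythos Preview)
Your argument is correct and essentially the same as the paper's: both define $g(s)=\Pr(A_T\mid S=s)$, write $\Pr(A_T\mid\text{init})=\mathbb{E}[g(S_{\text{init}})]$ via the law of total probability, and then use that first-order stochastic dominance plus monotonicity of $g$ gives the expectation inequality. The only presentational difference is that the paper invokes the FOSD--expectation implication as a black-box fact, whereas you prove it explicitly via the Abel summation identity; your version is thus slightly more self-contained but not a different route. Two remarks: first, your closing two paragraphs attempt to \emph{justify} hypotheses (i) and (ii) for the concrete procedure, but the theorem takes these as assumptions, so that material is extraneous to the proof (the paper does not attempt it either, beyond an illustrative example); second, your strict-inequality clause correctly requires both strict dominance at $s_0$ \emph{and} $g(s_0)>g(s_0-1)$, which matches the caveat the paper adds in its proof even though the theorem statement itself is looser on this point.
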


\begin{proof}
Define the function
\begin{equation}
f(s) := \Pr(A_T \mid S = s), \qquad s \in \mathbb{N}.
\end{equation}
Assumption~\eqref{eq:monotone} states that $f$ is nondecreasing.

Under a given initialization scheme (random or diverse), we can write the
overall success probability using the law of total probability as
\begin{equation}
\Pr(A_T \mid \text{scheme})
=
\mathbb{E}\big[f(S_{\text{scheme}})\big],
\end{equation}
where $S_{\text{scheme}}$ is the corresponding random variable
($S_{\mathrm{rand}}$ or $S_{\mathrm{div}}$).

By first-order stochastic dominance~\eqref{eq:FOSD}, we know that for any
nondecreasing function $f$,
\begin{equation}
\mathbb{E}\big[f(S_{\mathrm{div}})\big]
\;\ge\;
\mathbb{E}\big[f(S_{\mathrm{rand}})\big],
\end{equation}
with strict inequality if dominance is strict on a set where $f$ is
strictly increasing.

Combining these facts gives
\begin{equation}
\Pr(A_T \mid \text{diverse})
=
\mathbb{E}\big[f(S_{\mathrm{div}})\big]
\;\ge\;
\mathbb{E}\big[f(S_{\mathrm{rand}})\big]
=
\Pr(A_T \mid \text{random}),
\end{equation}
with strict inequality under strict dominance. This is exactly the
claimed result.
\end{proof}

\paragraph{Simple illustration (signal-coverage model).}
As a concrete example, suppose each distinct informative hypothesis in
the initial pool independently has correctness probability $p>0$. Then,
conditional on observing $S=s$ such hypotheses,
\begin{equation}
\Pr(A_T \mid S = s)
= 1 - (1-p)^s,
\end{equation}
which is strictly increasing in $s$, verifying the monotonicity
assumption~\eqref{eq:monotone}. If diversity-aware initialization
increases the effective spread of hypotheses so that
$S_{\mathrm{div}}$ stochastically dominates $S_{\mathrm{rand}}$, then
\begin{equation}
\Pr(A_T \mid \text{diverse})
=
\mathbb{E}\big[1-(1-p)^{S_{\mathrm{div}}}\big]
\;\ge\;
\mathbb{E}\big[1-(1-p)^{S_{\mathrm{rand}}}\big]
=
\Pr(A_T \mid \text{random}),
\end{equation}
in agreement with Theorem~\ref{thm:diversity}.

\paragraph{Relation to the martingale result.}
Note that all improvement arises from a better initial distribution over
$\mathcal{F}_1$. The martingale property of \citet{choi2025debate} states
that, once $\mathcal{F}_1$ is fixed, debate cannot raise the conditional
expectation of correctness in subsequent rounds. Diversity-aware
initialization operates \emph{before} this point: it shifts the
distribution of $\mathcal{F}_1$ toward states that already contain useful
hypotheses, thereby increasing the overall probability of correctness
without altering the debate dynamics themselves.

\subsection{Confidence-Weighted Debate}
\label{app:confidence-weighted}
We now extend this model by allowing agents to emit not only an answer
$y_{i,t}$ but also a scalar confidence $w_{i,t}\in (0,1]$. At each
round $t$, agent $i$ therefore outputs a pair
$(y_{i,t}, w_{i,t})$.

\paragraph{Confidence-weighted counts.}
Given neighbors' outputs $\{(y_{j,t-1},w_{j,t-1}): j\in\mathcal{N}(i)\}$,
agent $i$ constructs a \emph{confidence-weighted} count vector
$\widetilde{\mathbf{c}}_{i,t}\in\mathbb{R}^K_{\ge 0}$ with components
\begin{equation}
  \label{eq:weighted-counts}
  \widetilde{c}^{(k)}_{i,t}
  := \sum_{j\in\mathcal{N}(i)} w_{j,t-1}\,
     \mathbf{1}\{y_{j,t-1}=k\}, \qquad k=1,\dots,K.
\end{equation}
We then update the Dirichlet parameters by adding the weighted counts:
\begin{equation}
  \label{eq:conf-update}
  \boldsymbol{\alpha}_{i,t}
  = \boldsymbol{\alpha}_{i,t-1} + \widetilde{\mathbf{c}}_{i,t}.
\end{equation}
This preserves conjugacy:
\begin{equation}
  \boldsymbol{\theta}_{i,t} \mid \mathcal{F}_{t}
  \sim \Dir(\boldsymbol{\alpha}_{i,t}),
\end{equation}
but the increments are now real-valued rather than integer counts.

For notational convenience, write the total Dirichlet mass as
\begin{equation}
  S_{i,t} := \sum_{k=1}^K \alpha^{(k)}_{i,t},
\end{equation}
and define, for the update from $t-1$ to $t$,
\begin{equation}
  \Delta_{i,t}
  := \sum_{k=1}^K \widetilde{c}^{(k)}_{i,t}
  = \sum_{j\in\mathcal{N}(i)} w_{j,t-1},
\end{equation}
and the confidence-weighted ``mass on the correct answer''
\begin{equation}
  \Delta^{(1)}_{i,t}
  := \widetilde{c}^{(1)}_{i,t}
  = \sum_{j\in\mathcal{N}(i)} w_{j,t-1}\,
    \mathbf{1}\{y_{j,t-1}=1\}.
\end{equation}
Under the confidence-weighted update~\eqref{eq:conf-update}, the belief
of agent $i$ in the correct answer at round $t$ can be written as
\begin{equation}
  \label{eq:p-update-fraction}
  p_{i,t}
  = \frac{\alpha^{(1)}_{i,t-1} + \Delta^{(1)}_{i,t}}
         {S_{i,t-1} + \Delta_{i,t}}.
\end{equation}

\subsection{From Martingale to Submartingale}
\label{app:submartingale}
We now study the conditional expectation of $p_{i,t}$ given
$\mathcal{F}_{t-1}$. The key observation is that, under
\eqref{eq:p-update-fraction}, the next-round belief can be written as a
convex combination of the current belief $p_{i,t-1}$ and a
confidence term.

\begin{lemma}[Convex combination form]
\label{lem:convex-combo}
Assume $\Delta_{i,t} > 0$. Define the realised fraction of confidence
mass that lands on the correct answer as
\begin{equation}
  \hat{q}_{i,t-1}
  := \frac{\Delta^{(1)}_{i,t}}{\Delta_{i,t}}.
\end{equation}
Then, for each agent $i$ and round $t$,
\begin{equation}
  \label{eq:p-convex-combo-pathwise}
  p_{i,t}
  = \bar\lambda_{i,t}\, p_{i,t-1}
    + \bigl(1-\bar\lambda_{i,t}\bigr)\, \hat{q}_{i,t-1},
\end{equation}
where
\begin{equation}
  \bar\lambda_{i,t}
  := \frac{S_{i,t-1}}{S_{i,t-1} + \Delta_{i,t}}
  \in (0,1).
\end{equation}
In particular, $p_{i,t}$ is (pathwise) a convex combination of
$p_{i,t-1}$ and $\hat{q}_{i,t-1}$.
\end{lemma}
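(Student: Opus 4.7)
The plan is to carry out a direct algebraic rewriting of the update formula~\eqref{eq:p-update-fraction}, using the identifications $\alpha^{(1)}_{i,t-1} = S_{i,t-1}\, p_{i,t-1}$ (from the definition of $p_{i,t-1}$) and $\Delta^{(1)}_{i,t} = \Delta_{i,t}\, \hat{q}_{i,t-1}$ (from the definition of $\hat{q}_{i,t-1}$). Substituting both into the numerator of~\eqref{eq:p-update-fraction} expresses $p_{i,t}$ as a ratio whose numerator is a linear combination of $p_{i,t-1}$ and $\hat{q}_{i,t-1}$ with coefficients $S_{i,t-1}$ and $\Delta_{i,t}$, and whose denominator is the sum of these two coefficients. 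Dividing through then gives the weights $\bar\lambda_{i,t}$ and $1-\bar\lambda_{i,t}$ in the stated form.

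After this substitution, I would check that $\bar\lambda_{i,t}\in(0,1)$: the assumption $\Delta_{i,t}>0$ ensures the denominator is strictly larger than $S_{i,t-1}$, while $S_{i,t-1}>0$ (because all Dirichlet parameters are strictly positive by construction in the model recalled in Appendix~\ref{app:DCM}) ensures the numerator of $\bar\lambda_{i,t}$ is strictly positive. So $\bar\lambda_{i,t}\in(0,1)$ and its complement is also in $(0,1)$, confirming that~\eqref{eq:p-convex-combo-pathwise} is genuinely a convex combination, not merely a weighted sum.

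I would also briefly note that the identity is pathwise rather than in expectation: it holds deterministically once the realised confidences and answers at round $t-1$ are fixed, so no conditioning argument is required here. The hypothesis $\Delta_{i,t}>0$ is needed only to make $\hat{q}_{i,t-1}$ well-defined; if $\Delta_{i,t}=0$ the update~\eqref{eq:conf-update} leaves $\boldsymbol{\alpha}_{i,t}=\boldsymbol{\alpha}_{i,t-1}$ and trivially $p_{i,t}=p_{i,t-1}$, which is the degenerate convex combination $\bar\lambda_{i,t}=1$.

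There is no real obstacle in this lemma: the statement is a tautology once $p_{i,t-1}$ and $\hat{q}_{i,t-1}$ are substituted into~\eqref{eq:p-update-fraction}. The value of the lemma lies in the \emph{form} it exposes, not in the difficulty of its derivation: writing $p_{i,t}$ as a convex combination of the prior belief and the confidence-weighted correct-answer fraction sets up the submartingale argument of Appendix~\ref{app:submartingale}, where taking conditional expectations of~\eqref{eq:p-convex-combo-pathwise} and invoking the assumption that confidence is positively correlated with correctness will yield $\mathbb{E}[\hat{q}_{i,t-1}\mid\mathcal{F}_{t-1}]\ge p_{i,t-1}$, and hence $\mathbb{E}[p_{i,t}\mid\mathcal{F}_{t-1}]\ge p_{i,t-1}$.
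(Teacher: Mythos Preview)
Your proposal is correct and follows essentially the same approach as the paper: substitute $\alpha^{(1)}_{i,t-1}=S_{i,t-1}\,p_{i,t-1}$ and $\Delta^{(1)}_{i,t}=\Delta_{i,t}\,\hat q_{i,t-1}$ into~\eqref{eq:p-update-fraction} and read off the convex-combination weights. The paper's only cosmetic difference is that it introduces $x:=S_{i,t-1}/\Delta_{i,t}$ to rewrite the fraction as $\tfrac{x}{x+1}\,p_{i,t-1}+\tfrac{1}{x+1}\,\hat q_{i,t-1}$ before identifying $\bar\lambda_{i,t}=x/(x+1)$, but this is the same computation you describe.
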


\begin{proof}
Starting from~\eqref{eq:p-update-fraction} and using
$\alpha^{(1)}_{i,t-1} = p_{i,t-1} S_{i,t-1}$ and
$\Delta^{(1)}_{i,t} = \hat{q}_{i,t-1}\,\Delta_{i,t}$, we obtain
\begin{equation}
  p_{i,t}
  = \frac{p_{i,t-1} S_{i,t-1} + \hat{q}_{i,t-1}\,\Delta_{i,t}}
         {S_{i,t-1} + \Delta_{i,t}}.
\end{equation}
Define $x := S_{i,t-1}/\Delta_{i,t}$. Then
\begin{equation}
  p_{i,t}
  = \frac{x p_{i,t-1} + \hat{q}_{i,t-1}}{x+1}
  = \frac{x}{x+1}\, p_{i,t-1}
    + \frac{1}{x+1}\, \hat{q}_{i,t-1},
\end{equation}
which is exactly~\eqref{eq:p-convex-combo-pathwise} with
$\bar\lambda_{i,t} = x/(x+1)\in(0,1)$.
\end{proof}

Lemma~\ref{lem:convex-combo} shows that, on each realisation, the new
belief is a convex combination of the previous belief and the realised
confidence-weighted fraction of correct answers among the neighbors. To
understand the effect on the \emph{expected} belief, we now take
conditional expectations and formalise a positive-correlation assumption.

\begin{assumption}[Confidence is positively correlated with correctness]
\label{ass:conf-correlation}
For each agent $j$ and round $t-1$, define
\begin{equation}
  p_{j,t-1}
  := \Pr(y_{j,t-1}=1 \mid \mathcal{F}_{t-1}),
\end{equation}
and let
\begin{equation}
  \rho_{j,t-1}
  := \frac{
      \mathbb{E}\!\left[w_{j,t-1}\,\mathbf{1}\{y_{j,t-1}=1\}
        \mid \mathcal{F}_{t-1}\right]
    }{
      \mathbb{E}\!\left[w_{j,t-1} \mid \mathcal{F}_{t-1}\right]
    }.
\end{equation}
We say confidence is \emph{positively correlated with correctness} if
\begin{equation}
  \rho_{j,t-1} \;\ge\; p_{j,t-1}
  \quad\text{for all $j,t$,}
\end{equation}
with strict inequality on a set of positive probability (for at least
one agent and round). In words, among the mass of answers that receive
high confidence, the fraction that are correct is at least as large as
the agent's marginal accuracy, and strictly larger sometimes.
\end{assumption}

We also define the conditional expectation of the confidence-weighted
correctness fraction:

\begin{definition}[Confidence-weighted expected correctness]
For each agent $i$ and round $t$, define
\begin{equation}
  \label{eq:q-def}
  q_{i,t-1}
  := \frac{
    \mathbb{E}\!\left[\Delta^{(1)}_{i,t} \mid \mathcal{F}_{t-1}\right]
  }{
    \mathbb{E}\!\left[\Delta_{i,t} \mid \mathcal{F}_{t-1}\right]
  }.
\end{equation}
\end{definition}

Under mild regularity conditions (e.g., finite second moments), $q_{i,t-1}$
captures the expected fraction of total confidence mass that lands on the
correct answer.

We can now relate $q_{i,t-1}$ to the neighbour-level
$\rho_{j,t-1}$'s:

\begin{lemma}[Decomposition of confidence-weighted correctness]
\label{lem:q-decomposition}
For each agent $i$ and round $t$,
\begin{equation}
  q_{i,t-1}
  = \sum_{j\in\mathcal{N}(i)} \omega_{j,t-1}\, \rho_{j,t-1},
\end{equation}
where
\begin{equation}
  \omega_{j,t-1}
  := \frac{
       \mathbb{E}[w_{j,t-1}\mid\mathcal{F}_{t-1}]
     }{
       \sum_{\ell\in\mathcal{N}(i)}
       \mathbb{E}[w_{\ell,t-1}\mid\mathcal{F}_{t-1}]
     }
  \in (0,1),
  \qquad \sum_{j\in\mathcal{N}(i)} \omega_{j,t-1} = 1.
\end{equation}
\end{lemma}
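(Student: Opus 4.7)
\medskip
\noindent\textbf{Proof plan for Lemma~\ref*{lem:q-decomposition}.}
The plan is to unpack the definition of $q_{i,t-1}$ and show that both numerator and denominator decompose as sums of per-neighbour quantities, so that rearrangement yields the convex combination of $\rho_{j,t-1}$'s with weights $\omega_{j,t-1}$. This is a direct computation using only linearity of conditional expectation and the definitions of $\Delta_{i,t}$, $\Delta^{(1)}_{i,t}$, and $\rho_{j,t-1}$; there is no probabilistic subtlety beyond ensuring the appropriate integrability to justify taking conditional expectations term by term. Because $w_{j,t-1}\in(0,1]$ and $\mathbf{1}\{y_{j,t-1}=1\}\in\{0,1\}$, all quantities are bounded, so linearity applies without issue.

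First, I would apply linearity of conditional expectation to the denominator of $q_{i,t-1}$, writing
\[
\mathbb{E}\!\left[\Delta_{i,t}\mid\mathcal{F}_{t-1}\right]
= \sum_{j\in\mathcal{N}(i)}
\mathbb{E}\!\left[w_{j,t-1}\mid\mathcal{F}_{t-1}\right],
\]
which recovers precisely the normalising sum appearing in the definition of $\omega_{j,t-1}$. Doing the same to the numerator gives
\[
\mathbb{E}\!\left[\Delta^{(1)}_{i,t}\mid\mathcal{F}_{t-1}\right]
= \sum_{j\in\mathcal{N}(i)}
\mathbb{E}\!\left[w_{j,t-1}\,\mathbf{1}\{y_{j,t-1}=1\}\mid\mathcal{F}_{t-1}\right],
\]
and the definition of $\rho_{j,t-1}$ lets me rewrite each summand as $\rho_{j,t-1}\,\mathbb{E}[w_{j,t-1}\mid\mathcal{F}_{t-1}]$.

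Next, I would substitute these two expansions back into the ratio defining $q_{i,t-1}$ in~\eqref{eq:q-def}. Factoring the common denominator $\sum_{\ell\in\mathcal{N}(i)}\mathbb{E}[w_{\ell,t-1}\mid\mathcal{F}_{t-1}]$ out of each term produces exactly $\sum_{j}\omega_{j,t-1}\rho_{j,t-1}$. The verification that $\omega_{j,t-1}\in(0,1)$ and $\sum_j \omega_{j,t-1}=1$ is immediate from their definition, provided $\mathbb{E}[w_{j,t-1}\mid\mathcal{F}_{t-1}]>0$ for each neighbour (which follows from $w_{j,t-1}\in(0,1]$ being strictly positive).

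The only potential obstacle is a degenerate case in which $\mathbb{E}[w_{j,t-1}\mid\mathcal{F}_{t-1}]=0$ for some $j$, or indeed for all $j$, which would make the ratio defining $q_{i,t-1}$ ill-posed. I would address this by noting that Assumption~\ref*{ass:conf-correlation} and the surrounding setting implicitly assume confidences are almost surely positive (as is the case in our implementation where $w_{i,t}$ is mapped into $(0,1)$). Under this mild regularity condition, the lemma holds as stated, and its content is essentially that $q_{i,t-1}$ is a confidence-mass-weighted average of the per-neighbour positive-correlation parameters $\rho_{j,t-1}$, which is exactly what is needed to combine with Lemma~\ref*{lem:convex-combo} and Assumption~\ref*{ass:conf-correlation} to establish the submartingale property of Theorem~\ref*{thm:confidence-main}.
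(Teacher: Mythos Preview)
Your proposal is correct and follows essentially the same approach as the paper: apply linearity of conditional expectation to expand both $\mathbb{E}[\Delta^{(1)}_{i,t}\mid\mathcal{F}_{t-1}]$ and $\mathbb{E}[\Delta_{i,t}\mid\mathcal{F}_{t-1}]$ as sums over neighbours, then recognise the ratio as $\sum_j \omega_{j,t-1}\rho_{j,t-1}$ using the definitions of $\rho_{j,t-1}$ and $\omega_{j,t-1}$. Your additional remarks on boundedness (to justify linearity) and on the degenerate case $\mathbb{E}[w_{j,t-1}\mid\mathcal{F}_{t-1}]=0$ are not in the paper's proof but are reasonable hygiene and do not alter the argument.
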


\begin{proof}
By definition,
\begin{align*}
  \mathbb{E}[\Delta^{(1)}_{i,t} \mid \mathcal{F}_{t-1}]
  &= \mathbb{E}\!\left[
      \sum_{j\in\mathcal{N}(i)} w_{j,t-1}
        \mathbf{1}\{y_{j,t-1}=1\}
      \,\middle\vert\, \mathcal{F}_{t-1}
    \right] \\
  &= \sum_{j\in\mathcal{N}(i)}
      \mathbb{E}\!\left[
        w_{j,t-1}\,\mathbf{1}\{y_{j,t-1}=1\}
        \mid \mathcal{F}_{t-1}
      \right].
\end{align*}
Similarly,
\begin{equation}
  \mathbb{E}[\Delta_{i,t} \mid \mathcal{F}_{t-1}]
  = \sum_{j\in\mathcal{N}(i)}
    \mathbb{E}[w_{j,t-1} \mid \mathcal{F}_{t-1}].
\end{equation}
Plugging into~\eqref{eq:q-def} yields
\begin{equation}
  q_{i,t-1}
  = \frac{
      \sum_{j\in\mathcal{N}(i)}
        \mathbb{E}[w_{j,t-1}\,\mathbf{1}\{y_{j,t-1}=1\}
                    \mid \mathcal{F}_{t-1}]
    }{
      \sum_{j\in\mathcal{N}(i)}
        \mathbb{E}[w_{j,t-1} \mid \mathcal{F}_{t-1}]
    }
  = \sum_{j\in\mathcal{N}(i)} \omega_{j,t-1}\, \rho_{j,t-1},
\end{equation}
with $\omega_{j,t-1}$ and $\rho_{j,t-1}$ as defined above.
\end{proof}

Combining Assumption~\ref{ass:conf-correlation} with
Lemma~\ref{lem:q-decomposition} immediately yields:

\begin{corollary}
\label{cor:q-larger-than-p}
Under Assumption~\ref{ass:conf-correlation}, if the debate graph is fully
connected and all agents are homogeneous so that $p_{j,t-1}=p_{i,t-1}$ for
all $j\in\mathcal{N}(i)$, then
\begin{equation}
  q_{i,t-1} \;\ge\; p_{i,t-1},
\end{equation}
with strict inequality on a set of positive probability.
\end{corollary}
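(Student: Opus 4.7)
The plan is to start from the pathwise identity of Lemma~\ref{lem:convex-combo} and reduce the submartingale inequality to a single-agent stochastic-dominance statement that follows from confidence being positively correlated with correctness. First, I would rewrite the pathwise belief change in the convenient form
$$p_{i,t} - p_{i,t-1} = \frac{U - p\,V}{s + V}, \qquad U := \Delta^{(1)}_{i,t},\ V := \Delta_{i,t},\ s := S_{i,t-1},\ p := p_{i,t-1},$$
which is immediate from the confidence-weighted Dirichlet update in Appendix~\ref{app:confidence-weighted}. The goal then becomes to show $\mathbb{E}[(U - pV)/(s+V) \mid \mathcal{F}_{t-1}] \ge 0$, with strict inequality on a set of positive probability.

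Second, I would exploit homogeneity together with conditional independence of agents' outputs given $\mathcal{F}_{t-1}$. Writing $U = \sum_j w_j X_j$ and $V = \sum_j w_j$ with $X_j := \mathbf{1}\{y_{j,t-1} = 1\}$, exchangeability lets me replace the sum by $N$ copies of the $j = 1$ term. Conditioning on $X_1$ and using $\Pr(X_1 = 1 \mid \mathcal{F}_{t-1}) = p$, the conditional expectation collapses to $N\,p(1-p)$ times
$$\mathbb{E}\!\left[\tfrac{w_1}{s + w_1 + V_{-1}} \,\big|\, X_1 = 1\right] - \mathbb{E}\!\left[\tfrac{w_1}{s + w_1 + V_{-1}} \,\big|\, X_1 = 0\right],$$
where $V_{-1} := \sum_{j \neq 1} w_j$ is independent of $(w_1, X_1)$. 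It therefore suffices to show, for every realization $c = s + v$ of $s + V_{-1}$, that $\mathbb{E}[w_1/(c+w_1) \mid X_1 = 1] \ge \mathbb{E}[w_1/(c+w_1) \mid X_1 = 0]$.

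Third, because $w \mapsto w/(c+w)$ is strictly increasing on $(0, \infty)$, this single-agent inequality is implied by first-order stochastic dominance of the conditional distribution $w_1 \mid X_1 = 1$ over $w_1 \mid X_1 = 0$, which is the natural reading of the theorem's hypothesis that higher-confidence answers are more likely to be correct. Strictness of the submartingale then follows from the ``set of positive probability'' clause in Assumption~\ref{ass:conf-correlation}: on that event the stochastic dominance is strict, and composition with a strictly increasing function preserves the strict inequality under expectation, so the conditional expectation of $p_{i,t} - p_{i,t-1}$ is strictly positive with positive probability.

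The main obstacle will be the random denominator $s + V$, which couples every agent's output and blocks any direct linearity-of-expectation argument on the ratio. The convex-combination representation of Lemma~\ref{lem:convex-combo} is what unlocks the reduction, and the conditional independence of $(w_1, X_1)$ from $V_{-1}$ is what lets me integrate out the remaining coupling agent by agent. A secondary subtlety is that the mean-correlation form $\rho \ge p$ stated in Assumption~\ref{ass:conf-correlation} is, strictly speaking, weaker than the stochastic-dominance condition needed in the third step; bridging this gap---either by strengthening the assumption to a likelihood-ratio condition, or by appealing to the informal ``higher confidence implies higher correctness'' wording of the theorem---is where I expect the proof will need the most care.
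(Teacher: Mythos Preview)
You are proving the wrong statement. The corollary concerns the quantity
\[
q_{i,t-1} \;=\; \frac{\mathbb{E}[\Delta^{(1)}_{i,t}\mid\mathcal{F}_{t-1}]}{\mathbb{E}[\Delta_{i,t}\mid\mathcal{F}_{t-1}]},
\]
which is a \emph{ratio of conditional expectations}, not the conditional expectation of a ratio. Given $\mathcal{F}_{t-1}$, both numerator and denominator are deterministic, so there is no random denominator to worry about; the entire apparatus you build---the pathwise identity $(U-pV)/(s+V)$, conditioning on $X_1$, integrating out $V_{-1}$, and first-order stochastic dominance of $w_1\mid X_1=1$ over $w_1\mid X_1=0$---is machinery for Theorem~\ref{thm:conf-submartingale} (the submartingale), not for this corollary. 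Your own remark that ``the random denominator $s+V$ \ldots blocks any direct linearity-of-expectation argument on the ratio'' makes the confusion explicit: for $q_{i,t-1}$ linearity of expectation is exactly what you need, and it applies without obstruction.

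The paper's proof is essentially two lines. By Lemma~\ref{lem:q-decomposition}, linearity in both numerator and denominator of $q_{i,t-1}$ gives
\[
q_{i,t-1} \;=\; \sum_{j\in\mathcal{N}(i)} \omega_{j,t-1}\,\rho_{j,t-1},
\qquad \omega_{j,t-1}\ge 0,\ \sum_j \omega_{j,t-1}=1,
\]
where $\rho_{j,t-1}$ is exactly the quantity appearing in Assumption~\ref{ass:conf-correlation}. Homogeneity gives $p_{j,t-1}=p_{i,t-1}$ for all $j$, so $\rho_{j,t-1}\ge p_{i,t-1}$ and the convex combination is $\ge p_{i,t-1}$, with strictness whenever some $\rho_{j,t-1}>p_{i,t-1}$ and $\omega_{j,t-1}>0$. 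No stochastic dominance, no likelihood-ratio strengthening, and no conditional-independence assumption across agents is needed; the mean-correlation form of Assumption~\ref{ass:conf-correlation} is already sufficient because it is precisely a statement about the terms $\rho_{j,t-1}$ that show up after the decomposition.
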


\begin{proof}
By Lemma~\ref{lem:q-decomposition},
\begin{equation}
  q_{i,t-1}
  = \sum_{j\in\mathcal{N}(i)} \omega_{j,t-1}\, \rho_{j,t-1},
\end{equation}
with $\omega_{j,t-1}\ge 0$ and $\sum_j \omega_{j,t-1}=1$. By
Assumption~\ref{ass:conf-correlation}, $\rho_{j,t-1} \ge p_{j,t-1}$ for
all $j$, with strict inequality for some $j$ with positive probability.
Under homogeneity $p_{j,t-1}=p_{i,t-1}$ for all $j$, so
\begin{equation}
  q_{i,t-1}
  \;\ge\; \sum_{j\in\mathcal{N}(i)} \omega_{j,t-1}\, p_{i,t-1}
  = p_{i,t-1},
\end{equation}
with strict inequality whenever at least one $\rho_{j,t-1} > p_{i,t-1}$
with $\omega_{j,t-1}>0$.
\end{proof}

We can now prove that confidence-weighted debate is a submartingale.

\begin{theorem}[Confidence-weighted debate is a submartingale]
\label{thm:conf-submartingale}
Assume:
\begin{enumerate}
  \item the debate graph is fully connected and all agents are
        homogeneous, so $p_{j,t-1} = p_{i,t-1}$ for all $j$;
  \item the Dirichlet update uses confidence-weighted counts
        \eqref{eq:weighted-counts}--\eqref{eq:conf-update}; and
  \item Assumption~\ref{ass:conf-correlation} holds.
\end{enumerate}
Then for each agent $i$ and each round $t\ge 1$,
\begin{equation}
  \mathbb{E}\!\left[p_{i,t} \mid \mathcal{F}_{t-1}\right]
  \;\ge\; p_{i,t-1},
\end{equation}
with strict inequality on a set of positive probability. That is,
$\{p_{i,t}\}_{t\ge 0}$ is a strict submartingale and therefore cannot be
a martingale.
\end{theorem}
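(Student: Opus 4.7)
The plan is to combine the pathwise convex-combination form from Lemma~\ref{lem:convex-combo} with the positive-correlation conclusion of Corollary~\ref{cor:q-larger-than-p}. A short algebraic manipulation of the convex-combination identity yields
\begin{equation*}
  p_{i,t} - p_{i,t-1} \;=\; \frac{\Delta^{(1)}_{i,t} - p_{i,t-1}\,\Delta_{i,t}}{S_{i,t-1} + \Delta_{i,t}},
\end{equation*}
so the submartingale claim reduces to showing that the conditional expectation of this quantity given $\mathcal{F}_{t-1}$ is nonnegative, with strict inequality on a set of positive probability.

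To handle the coupling between the random denominator $S_{i,t-1} + \Delta_{i,t}$ and the random numerator, I would enlarge $\mathcal{F}_{t-1}$ to the $\sigma$-algebra $\mathcal{G}_{t-1}$ generated additionally by the round-$(t-1)$ confidences $\{w_{j,t-1}\}_{j\in\mathcal{N}(i)}$, so that the denominator becomes $\mathcal{G}_{t-1}$-measurable and can be pulled out of the inner expectation. The tower property then gives
\begin{equation*}
  \mathbb{E}\!\left[\frac{\Delta^{(1)}_{i,t} - p_{i,t-1}\,\Delta_{i,t}}{S_{i,t-1} + \Delta_{i,t}} \;\Big|\; \mathcal{F}_{t-1}\right]
  \;=\; \mathbb{E}\!\left[\frac{\mathbb{E}[\Delta^{(1)}_{i,t}\mid\mathcal{G}_{t-1}] - p_{i,t-1}\,\Delta_{i,t}}{S_{i,t-1} + \Delta_{i,t}} \;\Big|\; \mathcal{F}_{t-1}\right].
\end{equation*}
Applying a conditional version of the decomposition of Lemma~\ref{lem:q-decomposition}, the inner numerator becomes a nonnegative weighted sum of $(\rho_{j,t-1}-p_{i,t-1})$ terms by Assumption~\ref{ass:conf-correlation} under homogeneity, exactly as in the proof of Corollary~\ref{cor:q-larger-than-p}. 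Dividing by the positive denominator and integrating over $\mathcal{G}_{t-1}$ preserves the sign, yielding the desired inequality.

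For strictness, I would invoke the strict clause of Assumption~\ref{ass:conf-correlation}: on the event of positive probability where $\rho_{j,t-1} > p_{i,t-1}$ for some $j$ with positive weight $\omega_{j,t-1}$, the conditional numerator is strictly positive. Because the denominator $S_{i,t-1} + \Delta_{i,t}$ is almost surely bounded (confidences lie in $(0,1]$ and there are only finitely many neighbours), the integrand is bounded strictly away from zero on that event, and taking the outer expectation produces strict inequality. This rules out the equality case and hence the martingale property, upgrading the submartingale to a strict one.

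The main obstacle is precisely the coupling between numerator and denominator, both of which depend on the same random $(y_{j,t-1},w_{j,t-1})$ pairs, so that a naive ratio-of-expectations argument based only on Corollary~\ref{cor:q-larger-than-p} cannot be applied directly (the ratio of expectations is not the expectation of the ratio). Conditioning on the confidences is the key trick that freezes the denominator, but it relies on lifting the marginal positive-correlation assumption to a conditional-on-weights version; in the homogeneous setting this extension is natural because agents' answer distributions are symmetric in the weights, but any careful proof has to make this step explicit rather than simply appeal to the unconditional Corollary~\ref{cor:q-larger-than-p}.
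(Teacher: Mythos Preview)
Your route differs from the paper's in a meaningful way. The paper starts from the same convex-combination identity of Lemma~\ref{lem:convex-combo}, then takes conditional expectations and asserts, via a brief appeal to ``Jensen-type arguments,'' that
\[
  \mathbb{E}[p_{i,t}\mid\mathcal{F}_{t-1}]
  \;=\;
  \lambda_{i,t}\,p_{i,t-1} + (1-\lambda_{i,t})\,q_{i,t-1},
  \qquad
  \lambda_{i,t} := \mathbb{E}[\bar\lambda_{i,t}\mid\mathcal{F}_{t-1}],
\]
and then invokes Corollary~\ref{cor:q-larger-than-p} to finish. In other words, the paper treats the expectation of the product $(1-\bar\lambda_{i,t})\hat q_{i,t-1}$ as the product of expectations, and identifies $\mathbb{E}[\hat q_{i,t-1}\mid\mathcal{F}_{t-1}]$ with the ratio of expectations $q_{i,t-1}$. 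These are exactly the two coupling steps you flag as problematic, and the paper does not justify them beyond the Jensen remark.

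Your approach---rewriting $p_{i,t}-p_{i,t-1}$ as a ratio with random numerator and denominator, then freezing the denominator by conditioning on the round-$(t-1)$ confidences---is more careful about this issue. It genuinely decouples $D$ from $N$ and would make the argument rigorous \emph{if} one had $\mathbb{E}[\mathbf{1}\{y_{j,t-1}=1\}\mid w_{j,t-1},\mathcal{F}_{t-1}]\ge p_{i,t-1}$ (or an equivalent statement ensuring the $\mathcal{G}_{t-1}$-conditional numerator is nonnegative almost surely). As you correctly note, this is a conditional-on-weights strengthening of Assumption~\ref{ass:conf-correlation}, not the assumption itself; the unconditional $\rho_{j,t-1}\ge p_{j,t-1}$ does not force the conditional version to hold for every realisation of the weights. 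So your last paragraph is the honest accounting: the clean submartingale statement needs a slightly stronger hypothesis than the one written, and neither your sketch nor the paper's proof fully closes that gap under Assumption~\ref{ass:conf-correlation} as stated. What your version buys is that it isolates precisely where the extra assumption is needed, whereas the paper's convex-combination route hides it inside the ``Jensen-type'' step.
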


\begin{proof}
From Lemma~\ref{lem:convex-combo} we have, pathwise,
\begin{equation}
  p_{i,t}
  = \bar\lambda_{i,t}\, p_{i,t-1}
    + (1-\bar\lambda_{i,t})\, \hat{q}_{i,t-1},
\end{equation}
with $\bar\lambda_{i,t}\in(0,1)$. Taking conditional expectations
with respect to $\mathcal{F}_{t-1}$ and using the fact that
$p_{i,t-1}$ is $\mathcal{F}_{t-1}$-measurable, we obtain
\begin{equation}
  \mathbb{E}[p_{i,t} \mid \mathcal{F}_{t-1}]
  = \mathbb{E}[\bar\lambda_{i,t} \mid \mathcal{F}_{t-1}]\, p_{i,t-1}
    + \mathbb{E}\big[(1-\bar\lambda_{i,t})\, \hat{q}_{i,t-1}
      \mid \mathcal{F}_{t-1}\big].
\end{equation}
By definition of $q_{i,t-1}$ and Jensen-type arguments, we can interpret
$q_{i,t-1}$ as the conditional expected fraction of confidence on the
correct answer, so that
\begin{equation}
  \mathbb{E}[\hat{q}_{i,t-1} \mid \mathcal{F}_{t-1}] = q_{i,t-1}.
\end{equation}
Using this and linearity of expectation we get
\begin{equation}
  \mathbb{E}[p_{i,t} \mid \mathcal{F}_{t-1}]
  = \lambda_{i,t}\, p_{i,t-1}
    + (1-\lambda_{i,t})\, q_{i,t-1},
\end{equation}
where
\begin{equation}
  \lambda_{i,t}
  := \mathbb{E}[\bar\lambda_{i,t} \mid \mathcal{F}_{t-1}]
  \in (0,1).
\end{equation}
Finally, by Corollary~\ref{cor:q-larger-than-p},
$q_{i,t-1}\ge p_{i,t-1}$ with strict inequality on a set of positive
probability. Hence
\begin{equation}
  \mathbb{E}[p_{i,t} \mid \mathcal{F}_{t-1}]
  = \lambda_{i,t} p_{i,t-1}
    + (1-\lambda_{i,t}) q_{i,t-1}
  \;\ge\; \lambda_{i,t} p_{i,t-1}
    + (1-\lambda_{i,t}) p_{i,t-1}
  = p_{i,t-1},
\end{equation}
with strict inequality whenever $q_{i,t-1}>p_{i,t-1}$ and
$1-\lambda_{i,t}>0$. Thus $\{p_{i,t}\}_{t\ge 0}$ is a strict
submartingale and cannot be a martingale.
\end{proof}

\paragraph{Discussion.}
Theorem~\ref{thm:conf-submartingale} makes precise the intuition that
confidence-weighted debate breaks the martingale behavior of vanilla
multi-agent debate. In the unweighted case, symmetry implies that the
expected belief in the correct answer is preserved across rounds. Once
we weight peer messages by confidence, and confidence is even mildly
informative about correctness, the expected belief drifts upward. In
other words, confidence-weighted debate can, in expectation, do strictly
better than simple unweighted majority vote.

\newpage
\section{Example Output}

\begingroup 
\footnotesize 
\keepXColumns
\begin{tabularx}{\textwidth}{X}

\caption{Example of Debate Trajectory} \label{tab:lunar_trace} \\
\toprule
\textbf{Model Input} \\ \midrule
\texttt{<|begin\_of\_text|><|start\_header\_id|>system<|end\_header\_id|>} \\
\\
Cutting Knowledge Date: December 2023 \\
Today Date: 26 Jul 2024 \\
\\
\texttt{<|eot\_id|><|start\_header\_id|>user<|end\_header\_id|>} \\
\\
You are revising your answer after reviewing other agents’ reasoning. \\
\\
\textbf{Question:} If the Moon is setting at noon the phase of the Moon must be \\
(A) third quarter. \\
(B) waning crescent. \\
(C) waxing crescent. \\
(D) full. \\
\\ \cmidrule(lr){1-1}
\textbf{Other agents’ responses:} \\
\\
\textbf{Agent 1} provided the following reasoning and final answer: \\
\textit{<reasoning>} To determine the phase of the Moon when it is setting at noon, we need to understand the lunar cycle and how it relates to the position of the Earth, Sun, and Moon. \\
1. The lunar cycle is about 29.5 days, which is the time it takes for the Moon to orbit the Earth. \\
(Steps 2 - 7 omitted here) \\
8. Since the Moon sets at noon, this means that the portion of the illuminated Moon that is visible from our location on Earth is decreasing. \\
9. Therefore, it must be in the waning phase of the lunar cycle. \\
The phase of the Moon when it sets at noon is waning. So, the answer is (B) waning crescent.\textit{</reasoning>} \\
\texttt{<answer>(B)</answer>} \texttt{<confidence>2</confidence>} \\
\\
\textbf{Agent 2} reasoning: \\
\textit{<reasoning>} The Moon is setting at noon, and it's in the waning phase. Since it's waning and visible as a crescent in the sky at noon, we are witnessing a waning crescent Moon.\textit{</reasoning>} \\
\texttt{<answer>(B)</answer>} \texttt{<confidence>3</confidence>} \\
\\
\textbf{Agent 3} reasoning: \\
\textit{<reasoning>} Based on the relationship between the Earth, Sun, and Moon's positions, the Moon's phase when setting at noon can be determined. Since the Moon is on the opposite side of the Earth from the Sun at this time, it can either be waning crescent or third quarter. However, the only phase that meets the condition of the Moon being on the left side of the Sun and setting at noon is the third quarter.\textit{</reasoning>} \\
\texttt{<answer>(A)</answer>} \texttt{<confidence>2</confidence>} \\
\\
\textbf{Agent 4} reasoning: \\
\textit{<reasoning>} When the Moon is setting at noon, it is on the opposite side of the Earth from the Sun, and it must be fully illuminated. Therefore, it cannot be in any waning phase, waxing crescent phase, or third quarter. This leaves us with the full moon phase as the only possibility.\textit{</reasoning>} \\
\texttt{<answer>(D)</answer>} \texttt{<confidence>3</confidence>} \\
\\ \cmidrule(lr){1-1}
\textbf{Your previous reasoning and answer:} \\
The Moon's position relative to the Sun at noon determines its phase. Since the Moon is in the western part of the sky at noon and the Sun is in the eastern part, the Moon must be on the opposite side of the Earth from the Sun. This can only be the case for a full moon. \\
\texttt{<answer>(D)</answer>} \texttt{<confidence>3</confidence>} \\
\\
\textbf{Instructions:} \\
- Reflect on how others reasoned. \\
- You may revise your answer if someone’s reasoning provides stronger evidence. \\
- However, if you believe all of them missed something important, propose a better or alternative answer — clearly explain why. \\
- Be concise and clear. \\
- Update your confidence level to reflect how certain you are now. Confidence level is ranging from 0 to 10, where 0 means no confidence at all and 10 means complete confidence \\
\\
Think step by step first, at the end of your reasoning, provide your final answer and confidence level in the following format: \\
\\
\texttt{<reasoning>}YOUR DETAILED REASONING HERE\texttt{</reasoning>} \\
\texttt{<answer>}YOUR FINAL ANSWER\texttt{</answer>}, where YOUR FINAL ANSWER is one of (A), (B), (C) or (D). \\
\texttt{<confidence>}INTEGER\texttt{</confidence>} \texttt{<|eot\_id|>}

\\ \cmidrule(lr){1-1}
\textbf{Model Output}  
\\ \cmidrule(lr){1-1}

\textit{<reasoning>}Upon reviewing the reasoning provided by Agents 1, 2, 3, and 4, I notice that we all agree on the basic premise that the Moon's phase can be determined by its position relative to the Sun at noon. However, our conclusions differ slightly. Agent 1 and Agent 2 both consider the waning crescent Moon as a possibility, but Agent 3 and I initially agreed on the full moon phase. Agent 4, on the other hand, also concludes that the full moon phase is the only possibility. \\
\\
Upon further reflection, I realize that Agent 3's reasoning is particularly strong, as they correctly identify that the Moon must be in the third quarter phase to meet the condition of being on the left side of the Sun and setting at noon. This is a more precise analysis than my initial reasoning, which focused solely on the full moon phase. \\
\\
However, I also notice that Agent 4's reasoning is not entirely flawed. They correctly point out that the Moon must be fully illuminated when setting at noon, which does indeed suggest the full moon phase. The key difference lies in the interpretation of the Moon's position relative to the Sun. While Agent 4 assumes the Moon is on the right side of the Sun, which would indeed result in a full moon, Agent 3's analysis shows that the Moon can be on the left side of the Sun and still be in the third quarter phase. \\
\\
Considering these points, I revise my initial answer to reflect the stronger evidence presented by Agent 3. While the full moon phase is still a possibility, the third quarter phase is a more accurate conclusion based on the Moon's position relative to the Sun at noon.\textit{</reasoning>} \\
\texttt{<answer>(A)</answer>} \\
\texttt{<confidence>8</confidence>} \\
\bottomrule
\end{tabularx}
\endgroup 

\end{document}